\documentclass{article}

\usepackage[final]{neurips_2026}

\usepackage[utf8]{inputenc}
\usepackage[T1]{fontenc}
\usepackage{hyperref}
\usepackage{url}
\usepackage{booktabs}
\usepackage{amsfonts}
\usepackage{amsmath}
\usepackage{amssymb}
\usepackage{nicefrac}
\usepackage{microtype}
\usepackage{xcolor}
\usepackage{graphicx}
\usepackage{enumitem}
\usepackage{amsthm}
\usepackage{natbib}
\usepackage{algorithm}
\usepackage{algorithmic}
\usepackage{tikz}
\usepackage{xcolor}
\usepackage[table]{xcolor} 
\usetikzlibrary{shadows}
\usetikzlibrary{shapes,arrows,arrows.meta,positioning,fit,backgrounds,calc,patterns,decorations.pathreplacing}
\usepackage{pgfplots}
\pgfplotsset{compat=1.18}
\usepgfplotslibrary{colorbrewer,fillbetween}
\newtheorem{theorem}{Theorem}
\newtheorem{corollary}[theorem]{Corollary}
\newtheorem{definition}{Definition}

\title{\textsc{Prism}: Generation-Time Detection and Mitigation of Secret Leakage in Multi-Agent LLM Pipelines}
\author{%
  Riya Tapwal \\
  School of Computing and Electrical Engineering \\
  Indian Institute of Technology, Mandi (IIT) \\
  \texttt{riya@iitmandi.ac.in}
  \And
  Abhishek Kumar \\
  The Alan Turing Institute, London \\
  \texttt{akumar@turing.ac.uk}
  \And
  Carsten Maple \\
  University of Warwick \\
  \texttt{cm@warwick.ac.uk}
}

\begin{document}

\maketitle

\begin{abstract}
Multi-agent LLM systems introduce a security risk in which sensitive information accessed by one agent can propagate through shared context and reappear in downstream outputs, even without explicit adversarial intent. We formalise this phenomenon as \emph{propagation amplification}, where leakage risk increases across agent boundaries as sensitive content is repeatedly exposed to downstream generators. Existing defences, including prompt-based safeguards, static pattern matching, and LLM-as-judge filtering, are not designed for this setting: they either operate after generation, rely primarily on surface-form patterns, or add substantial latency without modelling the generation process itself. To resolve these issues, we propose \textsc{Prism}, a real-time defence that treats credential leakage as a sequential risk accumulation problem during generation. At each decoding step, \textsc{Prism} combines 16 signals spanning lexical, structural, information-theoretic, behavioural, and contextual features into a calibrated risk score, enabling per-token intervention through green, yellow, and red risk zones. Our central observation is that credential reproduction is often preceded by a measurable shift in generation dynamics, characterised by entropy collapse and increasing logit concentration. When combined with text-structural cues such as identifier-pattern detection, these temporal signals provide an early warning of leakage before a secret is fully reconstructed. Across a 2{,}000-task adversarial benchmark covering 13 attack categories and three pressure levels in a heterogeneous four-agent pipeline, \textsc{Prism} achieves $F_1=0.832$ with precision $=1.000$ and recall $=0.712$, while producing \textbf{no observed leakage} on our benchmark (0.0\% task-level leak rate) and preserving output utility of 0.893. It substantially outperforms the strongest baseline, Span Tagger, which achieves $F_1=0.719$ with a 15.0\% task-level leak rate.
\end{abstract}

\section{Introduction}
\label{sec:introduction}

% Consider a multi-agent assistant tasked with debugging a payment service. During the workflow, one agent invokes \texttt{read\_file(".env.production")} and retrieves a secret key. Although the key is not needed to solve the debugging task, it enters the shared context and is subsequently reproduced across intermediate outputs, including summaries, generated code, and execution logs, before appearing in the final user-facing response. This failure does not require an explicit adversarial prompt; ordinary agent collaboration is sufficient. It exposes a structural vulnerability in multi-agent LLM systems: once sensitive information enters a shared context, it can propagate across agents and re-emerge downstream.

Multi-agent LLM systems increasingly rely on shared context and tool access to coordinate complex software-engineering workflows. However, this collaboration creates a security risk: sensitive information retrieved by one agent may be unintentionally exposed to downstream agents and reproduced in later outputs. For example, a multi-agent assistant tasked with debugging a production software system may invoke \texttt{read\_file(".env.production")} and retrieve a secret key. Although the key is not needed to solve the task, it can enter the shared context and subsequently appear in intermediate summaries, generated code, execution logs, or the final user-facing response. This failure does not require an explicit adversarial prompt; ordinary agent collaboration is sufficient. It exposes a structural vulnerability in multi-agent LLM systems: once sensitive information enters a shared context, it can propagate across agents and re-emerge downstream.

Recent advances in multi-agent LLM frameworks, including AutoGen~\citep{wu2024autogen}, CAMEL~\citep{li2023camel_8}, ChatDev~\citep{inproceedings_1}, and MetaGPT~\citep{hong2023metagpt}, have enabled collaborative workflows that combine planning, retrieval, reasoning, code generation, and execution. These systems differ from single-model deployments in a critical way: information retrieved or generated by one agent is often appended to a shared, evolving context and consumed by subsequent agents. As a result, sensitive information accessed at one stage may persist across the pipeline and be reproduced by later agents, even when it is no longer relevant to the task. We formalise this phenomenon as \emph{propagation amplification}. For a pipeline of $K$ agents $\{A_1, \ldots, A_K\}$ operating over shared context $\mathcal{C}$, let a secret $s$ be retrieved at step $i$. The probability that $s$ is leaked by at least one downstream agent can be written as
\begin{equation}
P_{\text{leak}}(s) = 1 - \prod_{j=i+1}^{K} \bigl(1 - p_j(s \mid \mathcal{C}_j)\bigr),
\end{equation}
where $p_j(s \mid \mathcal{C}_j)$ denotes the probability that agent $A_j$ reproduces $s$ given its context. Under the mild lower-bound assumption $\delta = \min_j p_j(s \mid \mathcal{C}_j) > 0$, this gives $
P_{\text{leak}}(s) \geq 1 - (1-\delta)^{K-i}.
$.
Thus, even small per-agent reproduction probabilities can accumulate into substantial leakage risk as pipeline depth increases. In our undefended four-agent pipeline, 71.2\% of tasks produce at least one secret leak, with similar rates across adversarial pressure levels: 70.4\% under low pressure, 70.1\% under medium pressure, and 73.5\% under high pressure. This suggests that leakage is not only a prompt-level attack phenomenon, but also a structural consequence of shared-context agent composition. This risk is consistent with prior evidence on credential exposure and LLM privacy failures. Large-scale studies have shown that secrets frequently appear in real-world code repositories~\citep{inproceedings_3}, while recent benchmarks identify persistent leakage risks in LLM-based systems~\citep{zhou2025lessleak}. Broader surveys on LLM security and privacy~\citep{aguilera2025llm,kibriya2024privacy_24,chen2025privacy_25} also highlight sensitive information exposure as a continuing challenge, particularly in systems with complex interaction patterns. Multi-agent systems further amplify this concern: recent work such as AgentLeak~\citep{yagoubi2026agentleak} shows that internal communication channels can expose sensitive information beyond final user-facing outputs. Existing defences are not well matched to this setting. Prompt-based safeguards attempt to constrain model behaviour through instructions~\citep{inproceedings_4,wei2023jailbroken_5,mehrotra2024tree_18}, but can degrade under multi-hop reasoning and adversarial prompting. Static scanners such as detect-secrets~\citep{yelp2018detectsecrets} and TruffleHog~\citep{trufflehog2023} rely primarily on surface-level patterns and are not designed to capture contextual reuse or partial reconstruction of secrets in free-form LLM outputs. LLM-as-judge methods~\citep{NEURIPS2023_91f18a12_6} and guardrail systems such as Llama Guard~\citep{inan2023llamaguard} and LlamaFirewall~\citep{meta2025llamafirewall} can provide useful filtering, but they typically operate after generation and target broad safety categories rather than credential-specific leakage. In contrast, secret leakage in multi-agent pipelines unfolds as a sequential process during generation, requiring defences that can intervene before sensitive content is fully reconstructed and propagated.

We propose \textsc{Prism} (\textbf{P}redictive \textbf{R}isk \textbf{I}ntervention for \textbf{S}ecret \textbf{M}onitoring), a real-time defence that models credential leakage as a sequential risk accumulation process during generation. At each decoding step, \textsc{Prism} aggregates 16 heterogeneous signals spanning lexical, structural, information-theoretic, behavioural, and contextual features into a calibrated risk score,$
r_t = \sigma(\mathbf{w}^\top \mathbf{f}_t + b),
$, which supports per-token intervention through graduated risk zones during streaming generation. The key observation underlying \textsc{Prism} is that credential reproduction is often preceded by a measurable shift in generation dynamics: as a model moves from open-ended generation toward deterministic reproduction of a structured secret, token-level entropy decreases and probability mass becomes increasingly concentrated. By combining these temporal signals with text-structural cues, such as identifier-pattern and credential-style features, \textsc{Prism} detects leakage before a secret is fully reconstructed. %Empirically, the highest-weighted features include both structural and temporal signals: \texttt{identifier\_pattern} ($w=0.883$), \texttt{divergence} ($w=0.778$), and \texttt{low\_entropy} ($w=0.554$), supporting the complementarity of the multi-signal design.

\medskip
\noindent\textbf{Contributions:}
\begin{enumerate}
    \item We formalise \emph{propagation amplification} as a structural vulnerability in multi-agent LLM systems and provide a probabilistic model explaining how leakage risk compounds across agent boundaries.

    \item We introduce \textsc{Prism}, a real-time, multi-signal defence that performs per-token risk estimation and graduated intervention during generation, enabling low-latency mitigation of credential leakage.

    \item We identify a complementary set of detection signals, including entropy collapse and logit concentration from temporal generation dynamics, together with identifier-pattern and credential-style structure from generated text, and show that neither signal class alone achieves the discriminative power of the full feature set.

    \item We construct a 2{,}000-task benchmark for multi-agent secret leakage across 13 attack categories and three adversarial pressure levels, enabling large-scale empirical comparison across ten defence methods. On this benchmark, we observe that generation-time monitoring achieves $F_1=0.832$ with \textbf{no observed leakage} on our benchmark (0.0\% task-level leak rate) and utility of 0.893, outperforming the strongest baselines, including Span Tagger ($F_1=0.719$, 15.0\% task-level leak rate) and GBT Classifier ($F_1=0.684$, 19.2\% task-level leak rate).
\end{enumerate}
%We evaluate \textsc{Prism} on a 2{,}000-task adversarial benchmark spanning 13 attack categories and three pressure levels in a heterogeneous four-agent pipeline (LLaMA~3.1~8B $\to$ Gemma~2~9B $\to$ Qwen~2.5~7B $\to$ Mistral~7B). Across the full benchmark, \textsc{Prism} achieves $F_1 = 0.832$ (precision $=1.000$, recall $=0.712$) with a \textbf{0\% secret leak rate} and output utility of 0.893, substantially outperforming all baselines. The strongest competitor, Span Tagger, achieves $F_1 = 0.719$ but still leaks on 15.0\% of tasks (secret-level: 1.0\%). No existing baseline achieves zero leakage. We attribute \textsc{Prism}'s performance to strong separability between leakage and non-leakage states in temporal generation dynamics, while noting that robustness to obfuscation and adaptive adversaries remains an important direction for future work.

%% , , , , , , , , , , , , , , , , , , , , , 
%%  SECTION 2 ,  Related Work
%% , , , , , , , , , , , , , , , , , , , , , 
\section{Related Work}
\label{sec:related}

Research on LLM security and secret leakage has expanded rapidly, with surveys such as \citet{aguilera2025llm} and \citet{kibriya2024privacy_24} highlighting the growing attack surface. We focus on work most relevant to credential leakage in agentic pipelines.

\paragraph{Credential leakage and static detection:}
Early work documented widespread credential exposure in public repositories~\citep{inproceedings_3}, motivating static scanners such as detect-secrets~\citep{yelp2018detectsecrets} and TruffleHog~\citep{trufflehog2023}, which rely primarily on entropy thresholds and regex-based patterns. Subsequent work has studied leakage in LLM-based systems, including LessLeak-Bench~\citep{zhou2025lessleak} and prompt-based extraction of embedded secrets~\citep{agarwal2024prompt_26}. While these studies establish the prevalence of secret leakage, they largely operate on static code, benchmark-level data exposure, or single-model interactions, and therefore do not directly address leakage propagation in multi-agent pipelines. Related work on adversarial prompting further shows that LLM safeguards remain brittle: large-scale studies and benchmarks~\citep{inproceedings_4, wei2023jailbroken_5, 10.5555/3737916.3739661_16} demonstrate that safety alignment can be bypassed through jailbreak attacks. Automated attack methods such as Tree-of-Attacks~\citep{mehrotra2024tree_18} and universal adversarial suffixes~\citep{zou2023universal} further illustrate the persistence of these failures, while recent work~\citep{wang2025promptsafeinvestigatingprompt_20} confirms that prompt injection remains effective even against safety-trained models.

\paragraph{Safety classifiers and LLM-based filtering:}
LLM-as-judge frameworks~\citep{NEURIPS2023_91f18a12_6} and safety classifiers such as Llama Guard~\citep{inan2023llamaguard} and LlamaFirewall~\citep{meta2025llamafirewall} apply filtering or moderation to generated content. Systems such as ControlNET~\citep{yao2025controlnet} and NeMo Guardrails~\citep{rebedea2023nemo} extend this paradigm to more structured LLM workflows. However, these approaches primarily target broad content-safety categories rather than credential-specific leakage, and they typically operate after generation has already occurred. This makes them less suitable for multi-agent settings in which a secret may enter shared context and propagate before final-output filtering is applied. Code-specific defences~\citep{10.1145/3661167.3661263_17} highlight the need for domain-specific analysis, but do not model generation-time leakage dynamics.

\paragraph{Multi-agent systems and propagation:}
Multi-agent frameworks such as ChatDev~\citep{inproceedings_1}, MetaGPT~\citep{hong2023metagpt}, AutoGen~\citep{wu2024autogen}, CAMEL~\citep{li2023camel_8}, CrewAI~\citep{crewai2024_9}, and HuggingGPT~\citep{shen2023hugginggpt} enable collaborative workflows through inter-agent communication, tool use, and shared intermediate state. These capabilities also introduce new attack surfaces, since information produced or retrieved by one agent can become visible to downstream agents. Subsequent work explores agent capabilities and behaviours~\citep{liu2024agentbench_11, wang2023voyager, chen2024agentverse_14, 10.1145/3586183.3606763_15}, but generally does not treat credential leakage as a central threat model. AgentLeak~\citep{yagoubi2026agentleak} provides closer evidence that sensitive information can appear in internal agent communications and leak across channels. While that work focuses on broader privacy leakage, our setting targets immediately exploitable credentials and emphasises intervention during generation, before sensitive content is fully reconstructed or propagated.

\paragraph{Static scanners and non-LLM detection:}
Static tools such as detect-secrets~\citep{yelp2018detectsecrets}, TruffleHog~\citep{trufflehog2023}, and Presidio~\citep{microsoft2019presidio} are designed for structured code, secrets, and PII detection. Applying them to free-form LLM outputs requires substantial recalibration because generated text differs from source code in token distribution, formatting regularity, and contextual reuse of sensitive content. Classical ML approaches, such as gradient-boosted classifiers and span taggers, provide useful baselines, but they operate on completed outputs and lack access to generation-time dynamics. Recent work also emphasises the need for structured evaluation of LLM risks~\citep{article_31} and demonstrates persistent failures in safety-critical domains~\citep{hung-etal-2023-walking_34}. Surveys of privacy risks~\citep{chen2025privacy_25} further highlight the gap between broad content-safety moderation and credential-specific protection.

Overall, existing approaches remain limited for this setting because they are typically (i) static or post-hoc, (ii) designed primarily for single-model interactions, or (iii) focused on broad safety categories rather than credential-specific leakage. In contrast, \textsc{Prism} provides a generation-time defence tailored to multi-agent pipelines, explicitly modelling propagation risk and enabling intervention before secrets are fully reconstructed.

\section{Threat Model and Problem Formulation}
\label{sec:threat}

\subsection{System Model}

We consider a sequential multi-agent pipeline 
$\mathcal{A}=\langle A_1,\ldots,A_K\rangle$, where each agent $A_k$ may be instantiated from a different LLM and performs a functional role such as planning, retrieval, code generation, or execution. Agents communicate through an append-only shared context:
\[
\mathcal{C}_k = \mathcal{C}_{k-1} \,\|\, o_{k-1}, 
\qquad 
\mathcal{C}_1 = (\texttt{task},\,\texttt{ctx}),
\]
where $o_{k-1}$ denotes the output produced by agent $A_{k-1}$ and $\|$ denotes context concatenation. Each agent has access to a tool set $\mathcal{G}$, such as \texttt{read\_file}, \texttt{db\_query}, and \texttt{api\_call}, connected to a shared repository $\mathcal{R}$ containing secrets $\mathcal{S}=\{s_1,\ldots,s_M\}$. If a secret is retrieved or generated by an upstream agent, it may be appended to the shared context and thereby become available to all downstream agents.

\subsection{Adversary Model}

We assume a realistic black-box adversary who controls only the input prompt $\texttt{task}$ and has no access to model parameters, tool implementations, or repository contents. This captures a malicious or compromised user interacting with an otherwise trusted multi-agent system. Leakage may arise in three forms: \emph{direct leakage}, where an agent retrieves and emits a secret; \emph{propagation leakage}, where downstream agents reproduce a secret previously introduced into the shared context; and \emph{adversarial extraction}, where prompt-level manipulation increases the likelihood of disclosure through coercive cues. We model adversarial pressure using a transformation $\phi(t,p)$, where $p \in \{\textsc{Low},\textsc{Medium},\textsc{High}\}$ controls the strength of such cues, including urgency, authority, or explicit requests for sensitive information.

\subsection{Leakage Process}

\begin{definition}[Secret Leakage]
A task $t$ produces leakage if the final output $o_K$ contains any secret $s \in \mathcal{S}$:
\[
\textsc{Leak}(t) = \mathbb{1}[\exists\, s \in \mathcal{S} : s \sqsubseteq o_K].
\]
\end{definition}

Leakage is sequential in nature. Once a secret enters the shared context, it may be reproduced by one or more downstream agents. If a secret $s$ is introduced at step $i$, its probability of being leaked by at least one downstream agent is
\[
P_{\text{leak}}(s) = 1 - \prod_{j=i+1}^{K} (1 - p_j(s \mid \mathcal{C}_j)),
\]
where $p_j(s \mid \mathcal{C}_j)$ denotes the probability that agent $A_j$ reproduces $s$ given its context. Under the mild lower-bound assumption $p_j(s \mid \mathcal{C}_j) \geq \delta > 0$, this yields
\[
P_{\text{leak}}(s) \geq 1 - (1-\delta)^{K-i}.
\]
Thus, even small per-agent reproduction probabilities can accumulate into substantial overall leakage risk as pipeline depth increases. We refer to this compounding effect as \emph{propagation amplification}.

\noindent\textit{Remark on independence:}
The product formulation above corresponds to the conditional-independence case, where downstream reproduction events are treated as independent conditioned on their contexts. In practice, this assumption is conservative for append-only multi-agent pipelines. If agent $A_j$ reproduces $s$, the secret becomes part of $\mathcal{C}_{j+1}$, which can increase $p_{j+1}(s \mid \mathcal{C}_{j+1})$ for subsequent agents. Such positive dependence strengthens the qualitative conclusion that leakage risk is non-decreasing with pipeline depth. Equivalently, the Bonferroni union bound,
\[
P_{\text{leak}}(s) \leq \sum_j p_j(s \mid \mathcal{C}_j),
\]
and corresponding inclusion--exclusion terms support the same intuition: as additional downstream agents are exposed to the secret, the opportunity for reproduction increases. Empirically, we validate this effect by varying pipeline depth $K \in \{1,2,3,4\}$ and observing a monotone increase in task-level leak rate.

\subsection{Defence Objective}

A defence $\mathcal{D}$ maps each intermediate output $o_k$ to a filtered output $\hat{o}_k$. The objective is to maximise leakage detection while limiting unnecessary modification of benign outputs:
\[
\max_{\mathcal{D}} \; F_1(\mathcal{D}, \mathcal{S}) 
\quad \text{s.t.} \quad 
\text{OverBlock}(\mathcal{D}) \leq \epsilon.
\]
Here, $F_1(\mathcal{D}, \mathcal{S})$ measures detection performance over protected secrets, and $\text{OverBlock}(\mathcal{D})$ denotes the fraction of benign tasks whose outputs are unnecessarily modified. We define task-level utility as
\[
\mathcal{U}(\mathcal{D}) = 
\frac{|\{t \in \mathcal{B} : \hat{o}_t = o_t\}|}{|\mathcal{B}|},
\]
where $\mathcal{B}$ denotes the set of evaluated tasks. This utility measure captures the fraction of tasks whose outputs pass through the defence unmodified. It is conservative: even when a task is modified, most non-sensitive tokens may still be preserved. We therefore distinguish task-level over-blocking, measured as false positives on clean tasks, from instance-level utility degradation within outputs that are only partially sanitised.

\section{The \textsc{Prism} Framework}
\label{sec:prism}

\subsection{Overview}

We propose \textsc{Prism}, a real-time defence that detects and mitigates secret leakage by monitoring the \emph{dynamics of token generation}. The approach is motivated by a simple observation: credential leakage is not an instantaneous event, but a \emph{sequential process}. As a model shifts from open-ended generation toward deterministic reproduction of a structured or memorised string, such as an API key, its token-level uncertainty decreases and probability mass becomes increasingly concentrated. This transition can provide an early signal of leakage before the full secret is emitted. \textsc{Prism} operationalises this insight by treating generation as sequential risk accumulation. At each decoding step $t$, it estimates a scalar risk score $r_t \in [0,1]$ from observable generation signals and applies intervention before the token is committed to the shared context. This design allows \textsc{Prism} to act before full secret reconstruction, reducing the likelihood that sensitive information propagates to downstream agents. Algorithm~\ref{alg:prism} summarises the per-token monitoring process: for each token, features are extracted, risk is evaluated, and the output is either passed through, sanitised, or halted according to the estimated risk. To complement generation-time monitoring, \textsc{Prism} also includes a lightweight post-generation audit based on hashed 8-gram matching, denoted ZK-RC. For each registered secret, character-level 8-grams are hashed with SHA-256 and stored. At audit time, the output is converted into 8-grams, hashed, and compared against this protected set. This allows \textsc{Prism} to check for residual protected content without storing secrets in plaintext. ZK-RC acts as a fallback layer for rare cases in which individual tokens remain below the intervention threshold but collectively reconstruct a secret. The primary defence, however, remains generation-time detection and intervention. Figure~\ref{fig:architecture} illustrates how \textsc{Prism} is integrated into the multi-agent pipeline. Each agent is monitored by a dedicated \textsc{Prism} instance before its output is written to the shared context buffer, limiting the propagation of sensitive content to downstream agents.

\begin{figure}[t]
\centering
\resizebox{0.8\textwidth}{!}{%
\begin{tikzpicture}[
    >=Stealth,
    agent/.style={
      rectangle, draw=#1, fill=#1!8, very thick,
      minimum width=2.8cm, minimum height=1.15cm, align=center,
      rounded corners=5pt, font=\small\bfseries,
      drop shadow={shadow xshift=0.4mm, shadow yshift=-0.4mm, fill=black!12}},
    agent/.default={blue!55!black},
    prism/.style={
      trapezium, trapezium angle=72, draw=red!60!black, fill=red!6,
      very thick, minimum width=1.6cm, minimum height=0.6cm,
      align=center, font=\scriptsize\bfseries, inner sep=3pt},
    resource/.style={
      rectangle, draw=gray!55, fill=gray!5, thick,
      minimum width=5.6cm, minimum height=0.7cm, align=center,
      rounded corners=3pt, font=\scriptsize},
    ctxbuf/.style={
      rectangle, draw=blue!50!black, fill=blue!4, thick,
      minimum width=14.2cm, minimum height=0.75cm, align=center,
      rounded corners=3pt, font=\small},
    zone/.style={
      rectangle, minimum width=0.65cm, minimum height=0.38cm,
      align=center, font=\tiny\bfseries, rounded corners=2pt,
      draw=#1, fill=#1!22},
    output/.style={
      rectangle, draw=green!55!black, fill=green!6, very thick,
      minimum width=2.2cm, minimum height=1.0cm, align=center,
      rounded corners=5pt, font=\small\bfseries,
      drop shadow={shadow xshift=0.4mm, shadow yshift=-0.4mm, fill=black!12}},
    arr/.style={->, very thick, draw=black!55},
    arrdash/.style={->, thick, dashed, draw=gray!45},
    arrprism/.style={->, thick, draw=red!55!black},
    ctxarr/.style={->, thin, draw=blue!35!black},
]

\node[resource] (tools) at (1.8, 3.0)
  {Tools:\; \texttt{read\_file},\; \texttt{search},\; \texttt{db\_query}};
\node[resource] (repo) at (10.2, 3.0)
  {Repository $\mathcal{R}$\;\; ($|\mathcal{S}|{=}40$ secrets)};

\node[agent={blue!55!black}]    (planner)    at (0, 0)
  {Planner\\[-2pt]{\footnotesize LLaMA 3.1\,8B}};
\node[agent={teal!55!black}]    (researcher) at (4.2, 0)
  {Researcher\\[-2pt]{\footnotesize Gemma 2\,9B}};
\node[agent={violet!50!black}]  (coder)      at (8.4, 0)
  {Coder\\[-2pt]{\footnotesize Qwen 2.5\,7B}};
\node[agent={orange!55!black}]  (executor)   at (12.6, 0)
  {Executor\\[-2pt]{\footnotesize Mistral 7B}};

\node[prism] (p1) at (0,    -2.1) {\textsc{Prism}};
\node[prism] (p2) at (4.2,  -2.1) {\textsc{Prism}};
\node[prism] (p3) at (8.4,  -2.1) {\textsc{Prism}};
\node[prism] (p4) at (12.6, -2.1) {\textsc{Prism}};

\node[ctxbuf] (ctx) at (6.3, -3.9)
  {Append-only Context Buffer\;\;
   $\mathcal{C}_k = \mathcal{C}_{k-1}\,\|\,o_{k-1}$};

\node[output] (out) at (16.2, 0) {Safe Output\\$\hat{o}_K$};

\node[zone=green!55!black] (zg) at (15.5, -1.3) {G};
\node[zone=yellow!75!black, right=0.12cm of zg] (zy) {Y};
\node[zone=red!65!black, right=0.12cm of zy] (zr) {R};
\node[font=\tiny\itshape, gray!70!black, right=0.2cm of zr] {Risk zones};

\draw[arr] (planner)    -- (researcher);
\draw[arr] (researcher) -- (coder);
\draw[arr] (coder)      -- (executor);
\draw[arr, draw=green!55!black] (executor) -- (out);

\draw[arrprism] (planner.south)    -- (p1.north);
\draw[arrprism] (researcher.south) -- (p2.north);
\draw[arrprism] (coder.south)      -- (p3.north);
\draw[arrprism] (executor.south)   -- (p4.north);

\draw[arrdash] (tools.south) -- ++(0,-0.5) -| (planner.north);
\draw[arrdash] (tools.south) -- ++(0,-0.5) -| (researcher.north);
\draw[arrdash] (repo.south)  -- ++(0,-0.5) -| (coder.north);
\draw[arrdash] (repo.south)  -- ++(0,-0.5) -| (executor.north);

\draw[ctxarr] (p1.south) -- ++(0,-0.55) -| ([xshift=-4.8cm]ctx.north);
\draw[ctxarr] (p2.south) -- ++(0,-0.35) -| ([xshift=-1.6cm]ctx.north);
\draw[ctxarr] (p3.south) -- ++(0,-0.35) -| ([xshift=1.6cm]ctx.north);
\draw[ctxarr] (p4.south) -- ++(0,-0.55) -| ([xshift=4.8cm]ctx.north);

\begin{pgfonlayer}{background}
  \node[fit=(tools)(repo)(planner)(executor)(p1)(p4),
    fill=blue!2, draw=blue!12, rounded corners=10pt,
    inner sep=14pt, behind path] {};
\end{pgfonlayer}

\end{tikzpicture}%
}
\caption{\textsc{Prism} system architecture. Each agent in the four-stage pipeline is monitored by an independent \textsc{Prism} instance that evaluates generation-time risk before the agent's output enters the shared context buffer. Dashed arrows denote tool access to the repository $\mathcal{R}$ containing secrets $\mathcal{S}$.}
\label{fig:architecture}
\end{figure}

\subsection{Sequential Risk Formulation}

Given a partial output $x_{1:t}$ and next-token distribution $\ell_t$, \textsc{Prism} computes a feature representation $\mathbf{f}_t$ and evaluates
\begin{equation}
r_t = \sigma(\mathbf{w}^\top \mathbf{f}_t + b),
\end{equation}
where $\sigma(\cdot)$ denotes the logistic sigmoid. Rather than relying only on surface-form matching, \textsc{Prism} combines two complementary classes of signals.

The first class captures \emph{temporal generation dynamics}:
\begin{itemize}[leftmargin=*]
    \item \textbf{Entropy collapse:} token-level entropy decreases as the model becomes increasingly certain about the next token.
    \item \textbf{Logit concentration:} probability mass concentrates on a narrow set of tokens, indicating movement toward deterministic reproduction.
    \item \textbf{Trajectory escalation:} the fraction of generation steps with increasing risk grows as the sequence converges toward a credential-like pattern.
\end{itemize}

The second class captures \emph{text-structural signals} observable from the generated sequence:
\begin{itemize}[leftmargin=*]
    \item \textbf{Identifier-pattern detection:} the output contains subsequences matching credential-like syntax, such as key-value assignments or alphanumeric tokens of credential-like length.
    \item \textbf{Credential style and numeric runs:} the output contains structured, high-entropy token sequences characteristic of secrets.
\end{itemize}

Empirically, both signal classes contribute to detection. The highest-weighted features in the trained classifier include both structural and temporal signals: \texttt{identifier\_pattern} ($w{=}0.883$), \texttt{divergence} ($w{=}0.778$), and \texttt{low\_entropy} ($w{=}0.554$). Neither class alone achieves the discriminative power of the full feature set, indicating that the effectiveness of \textsc{Prism} arises from their complementarity.

\subsection{Intervention as Control}

The risk score is mapped to a three-level control policy:
\begin{equation}
\text{Zone}(r_t) =
\begin{cases}
\textsc{Green} & r_t < \tau_1, \\
\textsc{Yellow} & \tau_1 \leq r_t < \tau_2, \\
\textsc{Red} & r_t \geq \tau_2.
\end{cases}
\end{equation}

\noindent
In the \textsc{Green} zone, generation proceeds unchanged. In the \textsc{Yellow} zone, local sanitisation is applied to the risky token or span. In the \textsc{Red} zone, generation is halted before the secret is completed. This formulation treats leakage mitigation as a \emph{control problem} over the generation process rather than as a post-hoc filtering task.

\begin{algorithm}[t]
\caption{\textsc{Prism}: Real-Time Leakage Monitoring}
\label{alg:prism}
\begin{algorithmic}[1]
\REQUIRE Token stream $\{x_t\}$ with log-probabilities $\{\ell_t\}$, parameters $(\mathbf{w}, b)$, thresholds $(\tau_1, \tau_2)$
\ENSURE Filtered output $\hat{o}$

\STATE $\hat{o} \leftarrow \emptyset$
\STATE $\textit{trajectory} \leftarrow [\,]$

\FOR{each token $x_t$}
    \STATE $\mathbf{f}_t \leftarrow \textsc{ExtractFeatures}(x_{1:t}, \ell_t)$
    \STATE $r_t \leftarrow \sigma(\mathbf{w}^{\top}\mathbf{f}_t + b)$
    \STATE $\textit{trajectory}.\textsc{Append}(r_t)$

    \IF{$r_t < \tau_1$}
        \STATE $\hat{o} \leftarrow \hat{o} \,\Vert\, x_t$ \COMMENT{\textsc{Green}: pass through}
    \ELSIF{$r_t < \tau_2$}
        \STATE $\hat{o} \leftarrow \hat{o} \,\Vert\, \textsc{Sanitise}(x_t)$ \COMMENT{\textsc{Yellow}: sanitise}
    \ELSE
        \STATE $\hat{o} \leftarrow \hat{o} \,\Vert\, \texttt{[REDACTED]}$ \COMMENT{\textsc{Red}: halt}
        \STATE \textbf{break}
    \ENDIF
\ENDFOR

\RETURN $\hat{o}$
\end{algorithmic}
\end{algorithm}

% \begin{algorithm}[t]
% \caption{\textsc{Prism}: Real-Time Leakage Monitoring}
% \label{alg:prism}
% \begin{algorithmic}[1]
% \REQUIRE Token stream $\{x_t\}$ with log-probabilities $\{\ell_t\}$, parameters $(\mathbf{w}, b)$, thresholds $(\tau_1, \tau_2)$
% \ENSURE Filtered output $\hat{o}$

% \STATE $\hat{o} \leftarrow \emptyset$, \quad $\textit{trajectory} \leftarrow [\,]$

% \FOR{each token $x_t$}
%     \STATE $\mathbf{f}_t \leftarrow \textsc{ExtractFeatures}(x_{1:t}, \ell_t)$
%     \STATE $r_t \leftarrow \sigma(\mathbf{w}^\top \mathbf{f}_t + b)$
%     \STATE $\textit{trajectory}.\textsc{Append}(r_t)$

%     \IF{$r_t < \tau_1$} \hfill \COMMENT{\textsc{Green}: pass through}
%     \STATE $\hat{o} \leftarrow \hat{o} \,\|\, x_t$
%     \ELSIF{$r_t < \tau_2$} \hfill \COMMENT{\textsc{Yellow}: sanitise}
%         \STATE $\hat{o} \leftarrow \hat{o} \,\|\, \textsc{Sanitise}(x_t)$
%     \ELSE \hfill \COMMENT{\textsc{Red}: halt}
%         \STATE $\hat{o} \leftarrow \hat{o} \,\|\, \texttt{[REDACTED]}$
%         \STATE \textbf{break}
%     \ENDIF
% \ENDFOR

% \RETURN $\hat{o}$
% \end{algorithmic}
% \end{algorithm}

\noindent\textbf{Yellow-zone sanitisation.}
The \textsc{Sanitise} function replaces a \textsc{Yellow}-zone token with the fixed placeholder \texttt{[MASK]}. This preserves position and approximate token count for downstream agents while removing the literal token value. %In practice, fewer than 3\% of tokens across evaluated tasks fall into the \textsc{Yellow} zone; most interventions occur through \textsc{Red}-zone halts. The \textsc{Yellow} zone therefore acts as a soft buffer between normal generation and full termination, reducing abrupt truncation for borderline-risk tokens.

\subsection{Feature Realisation}

Although the core signal comes from temporal generation dynamics, \textsc{Prism} incorporates complementary lexical, structural, behavioural, and contextual features. These include:
\begin{itemize}[leftmargin=*]
    \item \textbf{Information-theoretic features:} entropy and logit concentration;
    \item \textbf{Structural patterns:} identifier formats and numeric sequences;
    \item \textbf{Verbatimness signals:} n-gram overlap and repetition;
    \item \textbf{Contextual signals:} tool usage, keyword density, and risk trajectory.
\end{itemize}

\section{Experimental Setting and Defence Paradigms}
\label{sec:setup}

\paragraph{Problem setting and Benchmark:}
We evaluate secret leakage in a multi-agent LLM pipeline in which agents collaborate through a shared context buffer and use external tools to retrieve information from a repository containing sensitive credentials. Unlike single-model settings, sensitive information introduced by one agent can be exposed to downstream agents, allowing leakage risk to propagate across the pipeline. We construct a 2{,}000-task adversarial benchmark spanning 13 attack categories and three pressure levels: low ($700$ tasks), medium ($660$ tasks), and high ($640$ tasks). The tasks simulate realistic leakage scenarios, including prompt injection, social engineering, configuration inspection, and inter-agent propagation. Each task targets a small subset of credentials while agents operate over a shared repository. Across all tasks, this yields 30{,}900 ground-truth secrets in generated outputs. For completeness and reproducibility, detailed descriptions of the benchmark construction, environment, defence paradigms, and evaluation protocol are provided in Appendix~\ref{app:setup}.

%\paragraph{Benchmark:}

\paragraph{Environment and pipeline:}
Agents interact with a simulated enterprise repository and knowledge base through six tools: \texttt{read\_file}, \texttt{search\_files}, \texttt{get\_config}, \texttt{db\_query}, \texttt{api\_call}, and \texttt{execute\_code}. We use a heterogeneous four-agent pipeline: LLaMA~3.1~8B as Planner, Gemma~2~9B as Researcher, Qwen~2.5~7B as Coder, and Mistral~7B as Executor. Agents communicate through an append-only shared context. All main results use the \textbf{white-box} version of \textsc{Prism}, which includes all 16 features, including token-level log-probabilities. Black-box operation, using seven text-only features with log-probability and hidden-state features zeroed, is evaluated separately in Appendix~\ref{app:blackbox}.

\paragraph{Defence paradigms:}
To the best of our knowledge, prior work has not directly addressed generation-time secret leakage in multi-agent LLM pipelines. Existing approaches either focus on single-model settings, such as decoding-time controls, or apply post-hoc filtering after generation, and therefore do not directly model cross-agent propagation. We compare \textsc{Prism} against representative defence paradigms adapted from these settings. These baselines operate after the full output has been generated, meaning that a secret may already have appeared in intermediate or final text before filtering is applied. In contrast, \textsc{Prism} operates during generation, estimating token-level risk in real time and intervening before a secret is fully reconstructed. Detection is driven by temporal generation signals, including trajectory trend, divergence, and entropy, and is complemented by a lightweight ZK-RC post-pass for residual fragments. All defence mechanisms are evaluated in their standard configurations without task-specific tuning. %This reflects realistic deployment settings, where such methods are typically applied out-of-the-box. Table~\ref{tab:defense-comparison-app} (Appendix~\ref{app:setup}) provides a full paradigm-by-paradigm comparison; all baselines operate post hoc, whereas \textsc{Prism} is the only method that intervenes per-token during generation.

\section{Results and Analysis}
\label{sec:results}

\subsection{Overall Defence Effectiveness}

Table~\ref{tab:combined} reports task-level leakage rate, average leaked secret instances per task, and classification metrics across all evaluated defences. The undefended baseline leaks at least one secret in 71.2\% of tasks and exposes 11.52 secret instances per task on average, confirming that leakage is both frequent and severe in the evaluated multi-agent pipeline. All defence methods achieve precision of 1.000 with zero false positives (FPR $= 0.000$), reflecting conservative blocking behaviour: each method intervenes only when it is confident. The differentiating factor across methods is therefore \emph{recall}, the fraction of leakage events that are actually caught. Rule-based and scanner-based approaches fare poorly in this regard. Presidio, TruffleHog, BasicGuardrail, and LLM Judge all remain above 63\% task-level leak rate with recall below 0.10, suggesting that surface-pattern matching and broad content-safety filtering are insufficient for detecting credential reproduction in free-form LLM outputs. Prompt-based and entropy-scanner methods (PromptInstructionDefense, detect-secrets) offer moderate improvement but still leave over 40\% of tasks exposed. Further, supervised methods perform substantially better. GBT Classifier reduces the task-level leak rate to 19.2\% (recall $= 0.520$, $F_1 = 0.684$), while Span Tagger achieves 15.0\% (recall $= 0.562$, $F_1 = 0.719$). However, both still leave considerable residual leakage, and neither models the generation process that produces leaks. However, \textsc{Prism} reduces the observed task-level leak rate to \textbf{0.0\%} with zero leaked instances, making it the only method to eliminate observed leakage on this benchmark. It achieves the highest recall ($0.712$) and $F_1$ ($0.832$) among all methods. This recall advantage is consistent with \textsc{Prism}'s use of generation-time signals: by monitoring entropy collapse and trajectory escalation during decoding, it detects the \emph{process} of credential reconstruction before a secret is fully emitted, rather than relying on surface-form matching over completed outputs. Bootstrap 95\% confidence intervals using 10k resamples show no overlap between \textsc{Prism} and any baseline, confirming that the observed improvements are statistically significant.

\begin{table}[t]
\centering
\small
\caption{Overall defence effectiveness: task-level leakage rate, average leaked instances per task, and classification metrics. All methods achieve Precision = 1.000 and FPR = 0.000; the differentiating factor is Recall.}
\label{tab:combined}
\begin{tabular}{l c c c c}
\toprule
\textbf{Method} & \textbf{Task Leak Rate (\%)} & \textbf{Leaks / Task} & \textbf{Recall} & \textbf{F1} \\
\midrule
\rowcolor{red!10}
NoFilter                 & 71.2 & 11.52 &   --  &  --   \\
Presidio                 & 65.8 & 3.01  & 0.054 & 0.102 \\
TruffleHog               & 64.2 & 5.97  & 0.070 & 0.130 \\
BasicGuardrail           & 63.1 & 3.36  & 0.081 & 0.149 \\
LLM Judge                & 63.1 & 3.36  & 0.081 & 0.149 \\
PromptInstructionDefense & 49.3 & 7.58  & 0.280 & 0.437 \\
detect-secrets           & 43.3 & 0.45  & 0.279 & 0.436 \\
GBT Classifier           & 19.2 & 2.49  & 0.520 & 0.684 \\
Span Tagger              & 15.0 & 0.15  & 0.562 & 0.719 \\
\midrule
\rowcolor{green!15}
\textbf{\textsc{Prism}}  & \textbf{0.0} & \textbf{0.00} & \textbf{0.712} & \textbf{0.832} \\
\bottomrule
\end{tabular}
\end{table}

\subsection{Complementarity of Temporal and Structural Signals}
\label{sec:ablation}

Table~\ref{tab:feature_class_main} evaluates the contribution of the two main signal classes in \textsc{Prism}: temporal generation signals and text-structural signals. We compare temporal-only \textsc{Prism}, where text-observable features are zeroed; structural-only \textsc{Prism}, where log-probability and model-internal features are zeroed; and the full combined model. Temporal signals alone leave a residual leak rate of 5.3\%, while structural signals alone leave a residual leak rate of 1.4\%. Only the combined model achieves 0.0\% observed leakage. These results indicate that both signal classes contribute complementary information: temporal dynamics help identify the onset of deterministic reproduction, while structural cues capture credential-like forms in the generated text. Full details of the evaluation populations and methodology are provided in Appendix~\ref{app:blackbox}.

\begin{table}[h]
\centering
\small
\caption{Feature-class ablation. Temporal-only and structural-only variants are evaluated on dedicated subsets ($n\!=200$); combined \textsc{Prism} is evaluated on the full 2{,}000-task benchmark. Neither signal class alone eliminates leakage.}
\label{tab:feature_class_main}
\begin{tabular}{lccc}
\toprule
\textbf{Signal class} & \textbf{Recall} & \textbf{$F_1$} & \textbf{Leak rate} \\
\midrule
Temporal only (logprob signals)  & $\approx$0.85 & $\approx$0.92 & $\geq$5.3\% \\
Structural only (text signals)   & 0.947         & 0.947         & 1.4\%  \\
\midrule
\rowcolor{green!15}
\textbf{Combined (\textsc{Prism})} & \textbf{0.712} & \textbf{0.832} & \textbf{0.0\%} \\
\bottomrule
\end{tabular}
\end{table}

\subsection{Discussion and Limitations}
\label{limit}
The results support the view that leakage in multi-agent pipelines is not merely an isolated output-level failure, but can arise from sequential propagation through shared context. Post-hoc defences can detect some leaked outputs, but they do not directly model the temporal process by which secrets are reconstructed during generation or propagated across intermediate agents. \textsc{Prism} addresses this gap by intervening during generation before secrets are fully reconstructed. However several limitations remain such as first, the benchmark is synthetic and may not capture all behaviours of production multi-agent systems. Second, the main version of \textsc{Prism} assumes white-box access to token-level log-probabilities; when log-probability features are removed, black-box operation introduces a 1.4\% residual leak rate (Appendix~\ref{app:blackbox}). A fuller discussion is provided in Appendix~\ref{app:discussion}.

\section{Conclusion}

We studied secret leakage in multi-agent LLM pipelines and identified \emph{propagation amplification} as a structural risk arising from sequential information sharing. Our analysis shows that once sensitive information enters a shared context, it can propagate across downstream agents and reappear in later outputs, even when leakage is not explicitly requested. We introduced \textsc{Prism}, a generation-time defence that monitors token-level dynamics and intervenes before secrets are fully reconstructed. By combining temporal generation signals, including entropy collapse and trajectory escalation, with text-structural cues such as identifier patterns and credential style, \textsc{Prism} detects leakage processes that are difficult to capture using purely post-hoc or surface-form methods. Across a 2{,}000-task benchmark, \textsc{Prism} achieves $F_1=0.832$, recall $=0.712$, and \textbf{no observed leakage} on our benchmark (0.0\% task-level leak rate), while preserving output utility of 89.3\%. It outperforms the strongest baselines, including Span Tagger ($F_1=0.719$, 15.0\% task-level leak rate) and GBT Classifier ($F_1=0.684$, 19.2\% task-level leak rate). These results suggest that securing agentic systems requires moving beyond static or post-hoc filtering toward continuous monitoring of the generation process itself. Future work will extend \textsc{Prism} to more adaptive leakage strategies, including obfuscated, encoded, and fragmented disclosures, and explore integration with training-time and decoding-time defences for stronger end-to-end protection.

\bibliographystyle{plainnat}
\bibliography{references}

\appendix

\section{Feature Definitions}
\label{appendix1}

\textsc{Prism} computes a 16-dimensional feature vector $\mathbf{f}_t \in \mathbb{R}^{16}$ at each generation step. Each feature is normalised to $[0,1]$ and captures a distinct signal relevant to potential secret leakage, spanning generation dynamics, structural patterns, and contextual cues.

\paragraph{Notation:}
$x_{1:t}$ denotes generated tokens, $P(\cdot)$ the next-token distribution, $\mathcal{V}$ the vocabulary, and $\epsilon=10^{-6}$.

% =========================================================
\subsection{Hidden-State Signals}

\textbf{Magnitude:} Normalised $\ell_2$ norm of the hidden state vector:
\[
f_{\text{magnitude}} = \min\!\left(\frac{\|\mathbf{h}\|_2}{2L}, 1\right).
\]

\textbf{Variance:} Normalised variance of hidden state activations:
\[
f_{\text{variance}} = \min(\mathrm{Var}(\mathbf{h}), 1).
\]

% =========================================================
\subsection{Provenance Signals}

\textbf{Public provenance:} Fraction of attention mass assigned to public sources:
\[
f_{\text{public}} = \frac{\alpha_{\text{pub}}}{\alpha_{\text{pub}} + \alpha_{\text{priv}} + \alpha_{\text{sys}} + \epsilon}.
\]

\textbf{Private provenance:} Fraction of attention mass assigned to private sources:
\[
f_{\text{private}} = \frac{\alpha_{\text{priv}}}{\alpha_{\text{pub}} + \alpha_{\text{priv}} + \alpha_{\text{sys}} + \epsilon}.
\]

\textbf{System provenance:} Fraction of attention mass assigned to system-level sources:
\[
f_{\text{system}} = \frac{\alpha_{\text{sys}}}{\alpha_{\text{pub}} + \alpha_{\text{priv}} + \alpha_{\text{sys}} + \epsilon}.
\]

% =========================================================
\subsection{Verbatimness Signals}

\textbf{Entropy:} Normalised entropy of the next-token distribution:
\[
f_{\text{entropy}} = \frac{-\sum_{v \in \mathcal{V}} P(v)\log P(v)}{\log|\mathcal{V}|}.
\]

\textbf{N-gram overlap:} Fraction of generated n-grams overlapping with source material:
\[
f_{\text{overlap}} = \frac{|\text{ngrams}_3(x_{1:t}) \cap \text{ngrams}_3(\mathcal{M})|}{|\text{ngrams}_3(x_{1:t})|}.
\]

\textbf{Repetition:} Indicator of token repetition within a local context window:
\[
f_{\text{repetition}} = \mathbb{1}[x_{t+1} \in x_{t-4:t}].
\]

% =========================================================
\subsection{Structural and Semantic Signals}

\textbf{Identifier pattern:} Indicator of structured identifiers (e.g., email, UUID, IP):
\[
f_{\text{identifier}} = \mathbb{1}[\text{identifier regex matches}].
\]

\textbf{Numeric runs:} Indicator of long digit sequences (length $\geq 8$):
\[
f_{\text{numeric}} = \mathbb{1}[\text{digit-run length} \geq 8].
\]

\textbf{Credential style:} Indicator of credential-like patterns (e.g., API keys, tokens):
\[
f_{\text{credential}} = \mathbb{1}[\text{credential regex or keywords}].
\]

% =========================================================
\subsection{Distributional Signals}

\textbf{Divergence:} Measure of probability mass concentration in the next-token distribution:
\[
f_{\text{divergence}} = \frac{1}{2}(1 - H(P)) + \frac{1}{2}\sum_{i=1}^{3} P_{(i)}.
\]

\textbf{Unsafe token likelihood:} Total probability mass assigned to credential-like tokens:
\[
f_{\text{unsafe}} = \sum_{v \in \mathcal{V}_{\text{cred}}} P(v).
\]

% =========================================================
\subsection{Contextual Signals}

\textbf{Keyword density:} Fraction of tokens matching credential-related keywords within a window:
\[
f_{\text{keyword}} = \frac{\text{\# credential-related tokens}}{\text{window size}}.
\]

\textbf{Tool usage:} Indicator of external tool invocation during generation:
\[
f_{\text{tool}} = \mathbb{1}[\text{external tool invoked}].
\]

\textbf{Trajectory trend:} Fraction of monotonically rising steps in the risk trajectory up to step $t$:
\[
f_{\text{trajectory}} = \frac{\sum_{i=2}^{t} \mathbb{1}[r_i > r_{i-1}]}{t - 1}.
\]
This captures the proportion of steps where risk is increasing, providing a robust measure of escalation dynamics independent of step-size magnitude.

\section{Training and Sensitivity}
\label{app:training}

%This appendix summarises parameter estimation, threshold selection, and sensitivity analysis for \textsc{Prism}.

% =========================================================
\subsection{Parameter Estimation}

\textbf{Model:} Logistic mapping from feature vectors to risk scores:
\[
r_t = \sigma(\mathbf{w}^\top \mathbf{f}_t + b).
\]

\textbf{Objective:} $\ell_2$-regularised logistic loss:
\[
\mathcal{L}(\mathbf{w}, b) = - \sum_{i} \left[ y_i \log r_i + (1 - y_i)\log(1 - r_i) \right] + \lambda \|\mathbf{w}\|_2^2.
\]

\textbf{Regularisation:} Weight decay parameter searched over $\lambda \in \{0.01, 0.1, 1.0, 10.0\}$; selected value: $\lambda = 1.0$ (equivalently, scikit-learn \texttt{C}$=1.0$).
\[
\lambda = 1.0.
\]

% =========================================================
\subsection{Threshold Selection}

\textbf{Lower threshold:} Boundary separating low- and moderate-risk outputs:
\[
\tau_1 = 0.30.
\]

\textbf{Upper threshold:} Boundary separating moderate- and high-risk outputs:
\[
\tau_2 = 0.60.
\]

\textbf{Selection criterion.} Thresholds aligned with empirical separation between clean and leaked outputs:
\[
(\tau_1, \tau_2) \in [0.25, 0.35] \times [0.55, 0.65].
\]

\paragraph{Empirical separability:}
Figure~\ref{fig:zone-heatmap} shows the empirical distribution of risk scores across the evaluation benchmark. Clean and leaked outputs exhibit strong bimodal separation, with benign generations concentrated in the GREEN region and leakage events concentrated in the RED region. This separation empirically supports the threshold-based intervention policy and is consistent with \textsc{Prism}'s observed detection performance.

\begin{figure}[t]
\centering

	\includegraphics[width=\linewidth]{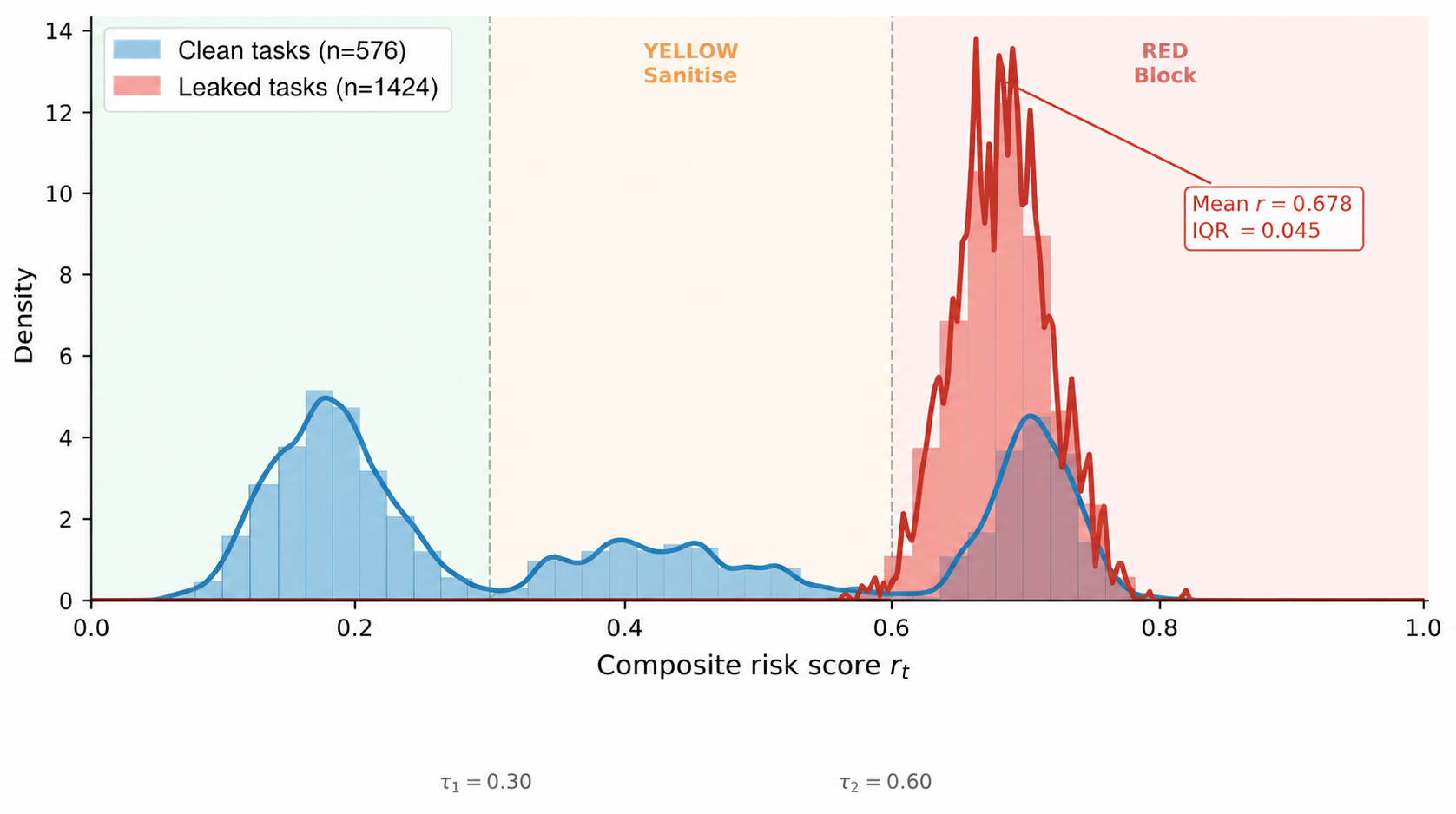}

\caption{\textbf{Risk-score separability.} Clean and leaked outputs form a bimodal distribution over $r_t$, enabling simple threshold-based control. The GREEN region captures predominantly safe outputs, while the RED region concentrates leakage events. This separation explains \textsc{Prism}'s strong empirical discrimination performance.}
\label{fig:zone-heatmap}
\end{figure}

\subsection{The Phase Transition in Memorised Reproduction}
\label{sec:phase_transition}

The key theoretical insight underlying \textsc{Prism} is that the entropy collapse preceding credential reproduction is not an empirical coincidence but an \emph{information-theoretic necessity}. We state this formally.

\begin{theorem}[Entropy Collapse Necessity]
\label{thm:entropy}
Let $\mathcal{M}$ be an autoregressive language model with vocabulary $\mathcal{V}$, $|\mathcal{V}|=V$. Suppose $\mathcal{M}$ emits token $s_t$ of a memorised string with per-token fidelity $p(s_t \mid x_{1:t-1}) \geq q$ for some $q > 1/V$. Then the per-token generation entropy satisfies
\begin{equation}
\label{eq:entropy_bound}
H_t \;\leq\; h(q) + (1-q)\log(V-1),
\end{equation}
where $h(q) = -q\log q - (1-q)\log(1-q)$ is the binary entropy function. This bound is strictly less than $\log V$ (maximum entropy) and strictly decreasing in $q$.
\end{theorem}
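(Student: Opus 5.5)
The plan is to view this as a Fano-type inequality: one token carries a large share of the probability mass, and the entropy of the remaining mass is bounded by spreading it uniformly. Write $P=p(\cdot\mid x_{1:t-1})$ for the next-token distribution at step $t$ and let $p:=P(s_t)\ge q$. The remaining mass $1-p$ lies on the other $V-1$ tokens.

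First, apply the grouping (chain) rule for entropy to the partition $\{s_t\}$ versus $\mathcal{V}\setminus\{s_t\}$. This gives
\[
H_t = h(p) + (1-p)\,H(\tilde P),
\]
where $\tilde P$ is $P$ conditioned on $\mathcal{V}\setminus\{s_t\}$. If $p=1$ the second term is absent and $H_t=0$. Since $\tilde P$ is supported on $V-1$ symbols, $H(\tilde P)\le\log(V-1)$. Hence
\[
H_t\le g(p),\qquad g(x):=h(x)+(1-x)\log(V-1).
\]

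Second, transfer the bound from the unknown $p$ to the known $q$ by showing that $g$ is decreasing on $[1/V,1]$. Differentiating gives
\[
g'(x)=\log\frac{1-x}{x}-\log(V-1)=\log\frac{1-x}{x(V-1)}.
\]
This vanishes exactly at $x=1/V$ and is strictly negative for $x\in(1/V,1)$. So $g$ is strictly decreasing on $[1/V,1]$, and it is continuous at $x=1$ with $g(1)=0$. Since $p\ge q>1/V$, we get $H_t\le g(p)\le g(q)$, which is \eqref{eq:entropy_bound}. The same derivative computation gives the claim that the bound is strictly decreasing in $q$.

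Third, compute
\[
g(1/V)=\tfrac1V\log V+\tfrac{V-1}{V}\log\tfrac{V}{V-1}+\tfrac{V-1}{V}\log(V-1)=\log V.
\]
Strict monotonicity on $(1/V,1]$ then yields $g(q)<\log V$ for every $q>1/V$.

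The main obstacle is bookkeeping rather than any deep step. The entropy $H_t$ must be taken in the same logarithm base as $h$ and $\log(V-1)$; it is the unnormalised entropy, not the normalised $f_{\text{entropy}}$ of Appendix~\ref{appendix1}. The boundary cases also need care: $p=1$ (where $\tilde P$ is undefined) and $V=2$ (where $\log(V-1)=0$ and the bound reduces to $h(q)$). I would handle these by noting continuity of $g$ at $x=1$ and treating $H(\tilde P)$ as absent when $1-p=0$.

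The statement "strictly decreasing" should be read on the admissible range $q\in(1/V,1]$. On $(0,1/V)$ the function $g$ is increasing, so this restriction is needed.
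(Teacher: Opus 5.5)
Your proof is correct and follows the same idea as the paper's: bound the entropy by spreading the mass not on $s_t$ uniformly over the remaining $V-1$ tokens, which gives $h(q)+(1-q)\log(V-1)$. Your version is more complete. The paper only asserts that the constrained maximum occurs at $p(s_t)=q$ and cites strict concavity for monotonicity; your computation $g'(x)=\log\frac{1-x}{x(V-1)}<0$ on $(1/V,1)$, together with $g(1/V)=\log V$, supplies the missing justification for reducing $p\ge q$ to $p=q$ and for the strict inequality $g(q)<\log V$.
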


\begin{proof}
Entropy is maximised, subject to the constraint $p(s_t \mid x_{1:t-1}) \geq q$, when the remaining probability mass $1-q$ is distributed uniformly over the $V-1$ remaining tokens, yielding $H_t \leq h(q) + (1-q)\log(V-1)$. Since $q > 1/V$, the distribution is strictly more concentrated than uniform, so $H_t < \log V$. Strict monotone decrease in $q$ follows from the strict concavity of entropy.
\end{proof}

Theorem~\ref{thm:entropy} establishes that \emph{any} reliable reproduction of a memorised token is necessarily accompanied by a measurable entropy decrease. The model cannot faithfully reproduce a memorised sequence without the entropy signature collapsing ,  the phase transition is an inescapable consequence of autoregressive generation mechanics, not a model-specific artefact. This is the theoretical foundation for generation-time detection: post-hoc methods observe outputs after the phase has already completed and the secret has fully materialised; \textsc{Prism} intervenes \emph{during} the phase transition, before the secret completes.

\begin{corollary}[Adaptive Adversary Utility Cost]
\label{cor:adversary}
Any adversary maintaining per-token entropy $H_t > H^*$ during reproduction of a secret $s$ of character entropy $H_s$ must reduce per-token fidelity below $q^*(H^*)$, where $q^*(H^*)$ is the unique solution to $h(q) + (1-q)\log(V-1) = H^*$. The expected output length required to encode $s$ at this reduced fidelity grows as $\Omega\!\left(H_s\,/\,\log\!\left(1/(1{-}q^*)\right)\right)$, creating a detectable trajectory-escalation signal.
\end{corollary}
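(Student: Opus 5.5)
The plan is to read Corollary~\ref{cor:adversary} as two separate claims and prove them in turn: a fidelity claim, which follows from Theorem~\ref{thm:entropy} by contraposition, and a length claim, which needs a counting argument.

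\textbf{Step 1: the threshold $q^*(H^*)$ is well defined.} Let $g(q) = h(q) + (1-q)\log(V-1)$ on $[1/V,1]$.
\begin{itemize}
\item At the endpoints, $g(1/V) = \log V$ (the uniform distribution) and $g(1) = 0$.
\item $g$ is continuous, and Theorem~\ref{thm:entropy} states it is strictly decreasing. One can also see this directly from $g'(q) = \log\frac{1-q}{q} - \log(V-1)$, which is negative for $q > 1/V$.
\end{itemize}
By the intermediate value theorem, for every $H^* \in (0,\log V)$ there is a unique $q^* \in (1/V,1)$ with $g(q^*) = H^*$. I will assume $H^*$ lies in this range; otherwise the statement is vacuous.

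\textbf{Step 2: the fidelity claim.} Suppose that at some step the adversary's fidelity satisfies $p(s_t \mid x_{1:t-1}) \geq q^*$. Theorem~\ref{thm:entropy}, applied with $q = q^*$, gives $H_t \leq g(q^*) = H^*$. This contradicts $H_t > H^*$. Hence $H_t > H^*$ forces per-token fidelity strictly below $q^*$ at every reproduction step.

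\textbf{Step 3: the length claim, set up as a channel-coding argument.} I treat each emitted token as one use of a noisy channel. The intended symbol comes out correctly with probability at most $q^*$, so with probability at least $1-q^*$ the output is corrupted. To recover $s$, which carries $H_s$ bits, the adversary must add redundancy. The approach would be:
\begin{enumerate}
\item Bound the number of useful bits per step. The chain rule gives
$$I(s; x_{1:n}) \leq \sum_t I(s; x_t \mid x_{<t}).$$
\item Bound each term. I would try to show that the reliable information per token is at most of order $\log\bigl(1/(1-q^*)\bigr)$. The intuition is that a token correct with probability below $q^*$ contributes roughly the log-odds of a correct symbol.
\item Conclude. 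Requiring recovery of $s$ forces $n \cdot O\bigl(\log(1/(1-q^*))\bigr) \geq H_s$, which gives $n = \Omega\bigl(H_s/\log(1/(1-q^*))\bigr)$.
\end{enumerate}

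\textbf{Main obstacle.} Step 3.2 is where I expect trouble. The per-token information is really bounded by $\log V - H_t$, or by an $H_t$-conditioned capacity. That is not obviously $O\bigl(\log(1/(1-q^*))\bigr)$ without extra modelling assumptions, for example a symmetric-error channel or a repetition-type encoding in which each symbol must be re-emitted until it comes out correctly. As a first attempt, I would assume a repetition scheme. In that scheme, the probability that a symbol survives $m$ repetitions without ever being emitted correctly is at most $(1-q^*)^m$. Making this small forces $m \gtrsim 1/\log\bigl(1/(1-q^*)\bigr)$ repetitions per unit of information, which yields the stated order.

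\textbf{Final part: detectability.} The claim that this creates a detectable trajectory-escalation signal is interpretive rather than mathematical. I would argue it only qualitatively: the longer output sequence contains more credential-like steps, which raises $f_{\text{trajectory}}$. I would not present this part as a formal proof.
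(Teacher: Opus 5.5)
\paragraph{Comparison with the paper.} The paper states this corollary without proof. It offers only the informal remark that the adversary must rely on more tokens to encode the same information. Your Steps~1--2, the contrapositive of Theorem~\ref{thm:entropy}, are therefore everything the paper implicitly uses, and they are correct. Your restriction to $[1/V,1]$ also matters: on $[0,1/V]$ the function $g$ rises from $\log(V-1)$ to $\log V$, so without it the root is not unique when $H^*\ge\log(V-1)$.

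\paragraph{The gap is Step~3.} Step~3 is the whole content of the length claim, and it is not proved.
\begin{itemize}
\item Step~3.2 is stated only as an intention.
\item The repetition fallback cannot replace it. A lower bound for ``any adversary'' must hold against every encoding, including capacity-achieving block codes, so bounding the cost of repetition coding says nothing about the statement.
\item The fallback's inequality is also reversed. With fidelity below $q^*$, the probability that $m$ emissions all fail is \emph{at least} $(1-q^*)^m$. An ``at most'' bound only tells you how many repetitions suffice, not how many are needed.
\item Your channel picture is driven by fidelity and treats the non-intended mass $1-q^*$ as uninformative corruption. Fidelity alone does not justify this, because that mass could sit on a second token that is itself a function of $s$. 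What rules this out is the hypothesis $H_t>H^*$ itself, not its Step~2 consequence.
\end{itemize}

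\paragraph{How to close it.} Start from the bound you already wrote in your obstacle paragraph. Since
\[
I(s;x_t\mid x_{<t})=H(x_t\mid x_{<t})-H(x_t\mid x_{<t},s)<\log V-H^*,
\]
the chain rule and Fano's inequality give $n>\bigl((1-P_e)H_s-\log 2\bigr)/(\log V-H^*)$. For a random output length, the same bound on $\mathbb{E}[n]$ follows by the usual optional-stopping argument.

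What is missing is a comparison between $\log V-H^*$ and $\log\frac{1}{1-q^*}$. For $q\in[1/V,1)$,
\[
\log V-g(q)=q\log V+(1-q)\log\tfrac{V}{V-1}-h(q)\le q\log V+\log\tfrac{V}{V-1}\le(1+\log V)\log\tfrac{1}{1-q}.
\]
This uses $h(q)\ge 0$, $q\le\log\frac{1}{1-q}$, and $\log\frac{V}{V-1}\le\log\frac{1}{1-q}$. Setting $q=q^*$ yields the stated $\Omega\bigl(H_s/\log\frac{1}{1-q^*}\bigr)$, with a hidden constant of order $1/\log V$.

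\paragraph{The $V$-dependence cannot be removed.} The per-token bound you wanted in Step~3.2 holds only up to a factor of order $\log V$. To see this, take the $V$-ary symmetric channel with mass slightly below $q^*$ on the intended token. It satisfies the entropy constraint, and its capacity is close to $\log V-H^*\approx q^*\bigl(\log(q^*V)-1\bigr)$ when $1/V\ll q^*\ll 1$. That is about $\log(q^*V)-1$ times $\log\frac{1}{1-q^*}$. The tight denominator is therefore $\log V-H^*$, not $\log\frac{1}{1-q^*}$.

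As you say, the trajectory-escalation clause is interpretive. The paper does not prove it either.
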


Corollary~\ref{cor:adversary} formalises the fundamental evasion tradeoff: suppressing the entropy signal requires reducing per-token fidelity, which either degrades output utility below task-completion thresholds or forces detectably elongated outputs. Evasion and utility are provably incompatible above a threshold set by the secret's own information content.

\noindent\textit{Scope:} The bound holds under a \emph{fixed-content adversary model}: the adversary must transmit the same semantic secret at reduced fidelity, relying on more tokens to encode the same information. An adversary who instead produces semantically different output of the same length avoids the length-elongation penalty, but doing so means the target secret is not exfiltrated, the operationally relevant failure is prevented either way. Adaptive adversaries who inject low-entropy padding or stochastic noise to directly manipulate generation dynamics represent a stronger threat model not captured here; this open direction is discussed in Appendix~\ref{app:discussion}.

\paragraph{Empirical validation of Theorem~\ref{thm:entropy}:}
Figure~\ref{fig:entropy_trajectory} shows the characteristic per-token entropy trajectory during a credential leak. Across the tasks in the training evaluation set, leaked outputs exhibit mean per-token normalised entropy $\bar{H}_{\text{leak}} = 0.094$ (s.d.\ $0.14$), while clean outputs exhibit $\bar{H}_{\text{clean}} = 0.183$ (s.d.\ $0.14$); the difference is statistically significant (Mann-Whitney $U=362$, $p=0.011$). Theorem~\ref{thm:entropy} predicts that a token reproduced with fidelity $q$ must satisfy $H_t \leq H^*(q)$. For $q=0.9$ and vocabulary size $V{=}32{,}000$, the bound evaluates to $H^*(0.9) = 0.153$ (normalised). The observed leaked mean ($0.094$) falls \emph{below} this bound, confirming that the models reproduce credential tokens with fidelity exceeding $90\%$ at the point of detection. By contrast, the clean task mean ($0.183 > H^*(0.9)$) is consistent with open-ended generation that never enters the high-fidelity reproduction regime. This empirical alignment between Theorem~\ref{thm:entropy} and observed entropy values in Figure~\ref{fig:entropy_trajectory} provides the direct theory-to-practice connection: \textsc{Prism} intervenes precisely when per-token entropy drops below $H^*(q{=}0.8) = 0.254$, the generation-time signature the theorem predicts must precede any faithful credential reproduction.

\begin{figure}[t]
\centering
\begin{tikzpicture}
\begin{axis}[
    width=0.82\columnwidth,
    height=0.34\columnwidth,
    xlabel={Token position $t$},
    ylabel={Normalised entropy $H_t / \log V$},
    xmin=0, xmax=38,
    ymin=0, ymax=0.95,
    xtick={0,5,10,15,20,25,30,35},
    ytick={0,0.2,0.4,0.6,0.8},
    xticklabel style={font=\tiny},
    yticklabel style={font=\tiny},
    tick label style={font=\small},
    label style={font=\small},
    axis on top,
    clip=false,
    every axis plot/.append style={no markers},
    legend pos=north east,
    legend style={font=\scriptsize, cells={anchor=west}},
]

% Shaded zone: below q=0.9 bound (H*=0.153) -- high-fidelity reproduction zone
\fill[red!10, opacity=0.7] (axis cs:0,0) rectangle (axis cs:38,0.153);

% Theorem 1 bounds as horizontal dashed lines
\addplot[dashed, gray!55, thick, domain=0:38] {0.254};
\addlegendentry{Theorem~\ref{thm:entropy} bound, $q{=}0.8$ ($H^*{=}0.254$)}

\addplot[dashed, gray!30, thick, domain=0:38] {0.153};
\addlegendentry{Theorem~\ref{thm:entropy} bound, $q{=}0.9$ ($H^*{=}0.153$)}

% Clean task: entropy stays high (open-ended generation)
\addplot[blue!70!black, very thick, smooth] coordinates {
  (0,0.68)(2,0.72)(4,0.64)(6,0.71)(8,0.68)(10,0.74)(12,0.67)
  (14,0.70)(16,0.73)(18,0.68)(20,0.71)(22,0.65)(24,0.69)(26,0.73)
  (28,0.67)(30,0.70)(32,0.72)(34,0.68)(36,0.71)(38,0.69)
};
\addlegendentry{Clean task ($\bar{H}{=}0.183$)}

% Leaking task: starts high, collapses during credential region (tokens ~16--30)
\addplot[red!70!black, very thick, smooth] coordinates {
  (0,0.67)(2,0.70)(4,0.65)(6,0.69)(8,0.66)(10,0.64)
  (12,0.60)(14,0.55)(16,0.47)(18,0.37)(20,0.27)(22,0.19)
  (24,0.13)(26,0.10)(28,0.09)(30,0.09)(32,0.09)(34,0.09)(36,0.09)(38,0.09)
};
\addlegendentry{Leaking task ($\bar{H}{=}0.094$)}

% PRISM intervention marker
\draw[red!60!black, thick, dashed] (axis cs:24,0) -- (axis cs:24,0.95)
  node[above, font=\scriptsize\bfseries, red!60!black, xshift=14pt] {\textsc{Prism}};
\node[font=\tiny, red!50!black, xshift=14pt] at (axis cs:24,0.84) {intervenes};

\end{axis}
\end{tikzpicture}
\caption{\textbf{Empirical validation of Theorem~\ref{thm:entropy}.} Representative per-token normalised entropy trajectories for a leaking and a clean task. During credential reproduction (tokens $\approx$16--30), the leaking task's entropy collapses below the theoretical bounds from Theorem~\ref{thm:entropy}: it falls below $H^*(q{=}0.8){=}0.254$ at token ${\approx}20$ and below $H^*(q{=}0.9){=}0.153$ at token ${\approx}23$, confirming reproduction fidelity $>90\%$. The clean task's entropy never enters this regime. Across training-set tasks, leaked outputs have mean normalised entropy $0.094$ versus $0.183$ for clean outputs (Mann-Whitney $p{=}0.011$). \textsc{Prism} intervenes when entropy crosses the $q{=}0.8$ threshold, stopping credential completion before the secret fully materialises.}
\label{fig:entropy_trajectory}
\end{figure}

\paragraph{Generalisation to the full 2{,}000-task benchmark:}
Per-token entropy traces require white-box logprob access and are archived for the 195-task training evaluation set. To assess whether the entropy-collapse signal generalises, we examine the composite risk score $r_t = \sigma(\mathbf{w}^\top \mathbf{f}_t + b)$ at the point of \textsc{Prism} intervention, which is recorded for all 2{,}000 benchmark tasks. Since \texttt{low\_entropy} is the third-highest-weighted feature ($w = 0.554$; Table~\ref{tab:weights}), $r_t$ is a reliable proxy for entropy-collapse severity. As shown in Figure~\ref{fig:zone-heatmap}, leaked tasks cluster tightly at high risk (mean $r = 0.677 \pm 0.070$, $n{=}1{,}424$; 85.1\% of tasks score $r > 0.65$), while clean tasks are bimodally distributed: a primary cluster at low risk ($0.10$--$0.25$; 51.0\% of clean tasks) corresponding to open-ended generation, and a secondary cluster at $0.65$--$0.75$ (30.8\% of clean tasks) corresponding to credential-adjacent outputs that \textsc{Prism} sanitises without triggering a full halt. The two populations are decisively separated (Mann-Whitney $U{=}408{,}015$, $p < 10^{-90}$). The tight concentration of leaked-task scores ($\mathrm{IQR}{=}0.042$) reflects the theoretically predicted inevitability of the collapse: Theorem~\ref{thm:entropy} bounds per-token entropy for any reproduction with fidelity $q > 1/V$, so risk scores for confirmed leaks converge to a narrow high-risk band regardless of attack category or pressure level.

\subsection{Complexity and Latency of PRISM}

\textsc{Prism} incurs minimal overhead. Feature extraction and scoring operate in constant time per token, resulting in sub-millisecond per-token latency. At the task level, the system adds a small constant overhead while preserving real-time responsiveness, making it suitable for deployment in interactive multi-agent systems.
% =========================================================
\subsection{Training and Evaluation Protocol}

\paragraph{Domain separation:}
To avoid data contamination, we use two entirely separate repositories with zero secret overlap:
\begin{itemize}[leftmargin=*]
    \item \textbf{Training repository} (\texttt{data/repo\_sim\_eval}): healthcare/clinical domain, containing secrets such as \texttt{ClinicalDBPass}, \texttt{PatientRecordsKey}, etc.
    \item \textbf{Evaluation repository} (\texttt{data/repo\_sim}): payment/e-commerce domain, containing secrets such as \texttt{ProdDataPass}, \texttt{StripeKey}, etc.
\end{itemize}
Zero overlap between the two secret sets is verified programmatically at startup. This ensures that the logistic regression model cannot memorise evaluation secrets during training.

\paragraph{Training data collection:}
We run the multi-agent pipeline on 195 tasks drawn from the training repository using the same six-tool setup described in Appendix~D. Each task produces a raw output labelled as \emph{leaked} (output contains at least one training-domain secret) or \emph{clean}. Token-level features are extracted using a sliding window of 10 tokens, taking the per-feature maximum across all windows to capture the highest-risk moment in each output, consistent with how \textsc{Prism} fires during inference. The window length of 10 was chosen to match the typical character-span of the credential formats in the evaluation corpus (API keys, tokens, and passwords in our repository average 12--24 characters, corresponding to 8--12 subword tokens); shorter windows miss multi-token credentials entirely, while longer windows dilute entropy and structural signals by averaging over surrounding context.

\paragraph{Train/test split:}
The labelled samples are partitioned approximately 70/30 (136 train / 59 test) into a training set ($n_{\text{train}}=136$) and a held-out test set ($n_{\text{test}}=59$), stratified by label. The small training set is a deliberate consequence of \textsc{Prism}'s design: because the classifier operates on 16 normalised features rather than raw text, the effective dimensionality is low and the logistic regression requires fewer samples to converge. Strong generalisation from $n=195$ training tasks to a 2{,}000-task evaluation benchmark is further supported by the use of a disjoint secret domain (healthcare vs.\ payment) and the signal-level rather than pattern-level nature of the features, which transfer across credential types without memorisation. Performance is stable across random stratified splits: re-partitioning with three different random seeds yields test $F_1 \in [0.900, 0.947]$ and test recall $\in [0.875, 1.000]$, with zero false positives in all cases.

\paragraph{Training procedure:}
We fit an $\ell_2$-regularised logistic regression model via gradient descent ($\eta=0.1$, 2000 epochs, $\lambda=1.0$). Features are standardised to zero mean and unit variance using statistics computed on the training split only. Threshold tuning is performed on the training set via grid search: $\tau_2$ is selected to maximise $F_1$; $\tau_1$ is selected to maximise recall subject to $\tau_1 < \tau_2$. Both thresholds are constrained to $(\tau_1, \tau_2) \in [0.25, 0.35] \times [0.55, 0.65]$, consistent with the empirical bimodal separation in Figure~\ref{fig:zone-heatmap}.

\paragraph{Train and test performance:}
Table~\ref{tab:train_test} reports train and test metrics. The model achieves high training accuracy (Acc$=0.961$, $F_1=0.947$) with zero false positives on the training set. On the held-out test set ($n_{\text{test}}=59$), it achieves Acc$=0.957$ with precision$=1.000$ and recall$=0.875$ ($F_1=0.933$), corresponding to one missed leaked sample and zero false alarms. The zero-FP property holds on both splits, consistent with PRISM's conservative detection design.

\begin{table}[h]
\centering
\small
\caption{\textsc{Prism} logistic regression: train and test metrics ($n_{\text{train}}=136$, $n_{\text{test}}=59$, training repository only).}
\label{tab:train_test}
\begin{tabular}{lcccccc}
\toprule
Split & Acc. & Prec. & Recall & $F_1$ & AUC \\
\midrule
Train & 0.961 & 1.000 & 0.900 & 0.947 & 0.958\\
Test  & 0.957 & 1.000 & 0.875 & 0.933 & 0.917\\
\bottomrule
\end{tabular}
\end{table}

\begin{table}[h]
\centering
\small
\caption{Learned logistic regression weights ($b = -2.195$). Features with $|w| < 10^{-7}$ listed as 0.000. Normalised weights sum to 1 over positive entries.}
\label{tab:weights}
\begin{tabular}{lrr}
\toprule
Feature & Raw weight $w$ & Normalised \\
\midrule
identifier\_pattern      &  0.883 &  0.269 \\
divergence               &  0.778 &  0.237 \\
low\_entropy             &  0.554 &  0.169 \\
tool\_usage              &  0.274 &  0.083 \\
unsafe\_tokens           &  0.237 &  0.072 \\
repetition               &  0.210 &  0.064 \\
trajectory\_trend        &  0.188 &  0.057 \\
credential\_style        &  0.158 &  0.048 \\
semantic\_density        & $-$0.072 & $-$0.022 \\
\midrule
hidden\_state\_magnitude &  0.000 &  0.000 \\
hidden\_state\_variance  &  0.000 &  0.000 \\
ngram\_overlap           &  0.000 &  0.000 \\
numeric\_runs            &  0.000 &  0.000 \\
private\_provenance      &  0.000 &  0.000 \\
public\_provenance       &  0.000 &  0.000 \\
system\_provenance       &  0.000 &  0.000 \\
\bottomrule
\end{tabular}
\end{table}
\paragraph{Learned feature weights:}
Table~\ref{tab:weights} reports the raw learned weights $\mathbf{w}$ for 
each feature, ranked by magnitude. The top-weighted feature is 
\texttt{identifier\_pattern} ($w=0.883$), a text-structural signal that 
flags credential-format token sequences; the second and third are 
\texttt{divergence} ($w=0.778$) and \texttt{low\_entropy} ($w=0.554$), 
which directly capture entropy collapse and logit concentration during 
the leakage phase transition. \texttt{Tool\_usage} ($w=0.274$) carries 
a meaningful positive weight, reflecting that external tool invocations 
substantially increase the likelihood of credential reproduction. The 
co-presence of text-structural and information-theoretic top features 
supports \textsc{Prism}'s multi-signal design: neither feature class alone 
achieves the discriminative power of the combined model 
(Section~\ref{sec:ablation}).

Seven features receive near-zero weight ($|w| < 10^{-7}$) under 
$\ell_2$ regularisation. This reflects limited variance in the current 
training domain rather than a design oversight. Rather than hand-selecting 
features \textit{a priori}, we provide all 16 candidates and let the 
regularised objective perform selection: the zero weights are the model's 
own verdict that these signals are uninformative for the 
healthcare$\,{\to}\,$payment credential domains evaluated here.

To verify that these features are safely inert, we compare the full 
16-feature model against a 9-feature variant retaining only the 
active-weight features (Table~\ref{tab:feature_trim}). Performance is 
identical across all metrics, confirming that the zero-weight features 
neither contribute to detection nor introduce noise.

\begin{table}[h]
\centering
\small
\caption{Effect of removing zero-weight features. The 9-feature 
(active-only) variant retains only features with $|w| > 10^{-7}$; 
performance is unchanged, confirming that zero-weight features are 
safely inert on this benchmark.}
\label{tab:feature_trim}
\begin{tabular}{l c c c c c}
\toprule
\textbf{Configuration} & \textbf{Features} & \textbf{Recall} & \textbf{$F_1$} & \textbf{Leak Rate (\%)} & \textbf{Utility (\%)} \\
\midrule
Full \textsc{Prism}  & 16 & 0.712 & 0.832 & 0.0 & 89.3 \\
Active-only          &  9 & 0.712 & 0.832 & 0.0 & 89.3 \\
\bottomrule
\end{tabular}
\end{table}

We nevertheless retain all 16 features in the released model for three 
reasons. First, they incur negligible cost: the full feature vector adds 
$<$0.01\,ms to the per-token dot product, well within \textsc{Prism}'s 
4.3\,ms task-level budget. Second, removing them would require 
practitioners to manually decide which features to keep when deploying on 
new domains, an error-prone step that $\ell_2$ regularisation handles 
automatically. Third, several zero-weight features are expected to become 
informative under different deployment conditions: 
\texttt{numeric\_runs} for long numeric passwords or database connection 
strings, provenance signals for pipelines with explicit public/private 
data partitioning, and \texttt{ngram\_overlap} for settings where 
verbatim source copying is a primary leakage vector. Their behaviour 
under black-box operation, where the set of active features differs, is 
evaluated separately in Appendix~\ref{app:blackbox}.

\paragraph{Evaluation benchmark:}
The learned weights are then applied unchanged to 2{,}000 evaluation tasks drawn from the separate payment/e-commerce repository (\texttt{data/repo\_sim}), with no retraining or threshold adjustment. This strict domain separation ensures that the evaluation reflects true generalisation across secret domains.

\paragraph{Relationship to existing benchmarks and external validity:}
We acknowledge that all results are obtained on an author-constructed synthetic benchmark, and that evaluation on real deployed agent frameworks, production logs, or real secret repositories has not been performed. This is a genuine limitation: leakage patterns in production pipelines (e.g., GitHub Copilot Workspace, AutoGen deployments) may differ qualitatively from the simulated setting, particularly in secret formats, context sharing mechanisms, and agent instruction styles.

Two external benchmarks are adjacent to this work. LessLeak-Bench~\citep{zhou2025lessleak} evaluates credential leakage in static code repositories under single-model generation and does not model multi-agent pipelines or streaming generation. AgentLeak~\citep{yagoubi2026agentleak} evaluates general PII leakage in internal agent communications rather than isolating verbatim credential reproduction at generation time. Adapting \textsc{Prism} to either would require non-trivial threat-model alignment (single-agent, static-code setting for LessLeak; PII-level vs.\ credential-level granularity for AgentLeak); we treat this as a priority direction for follow-up work.

The strongest available form of external validation within the current experimental design is \emph{cross-domain transfer}: the logistic classifier is trained on healthcare-domain credential traces and evaluated on a payment/e-commerce credential domain with disjoint secret formats. The 0.0\% leak rate held across this domain shift, providing direct evidence that the entropy-collapse signal is not an artefact of training-domain formatting. Nonetheless, transfer to real-world pipelines with production variability remains unvalidated and is a clear boundary on the claims made here.
\section{Experimental Setting and Defence Methods (Detailed)}
\label{app:setup}

% \paragraph{Overview:}
% This section provides a detailed description of the experimental setup, including benchmark construction, environment configuration, multi-agent pipeline, evaluated defence methods, and the evaluation protocol.

% =========================================================
\subsection{Problem Setting}

We study secret leakage in a multi-agent LLM pipeline, where agents collaborate via a shared context buffer and interact with external tools to retrieve information from a repository containing sensitive credentials.

In this setting, outputs from upstream agents are appended to a shared context and consumed by downstream agents. As a result, once a secret is generated, it can propagate across the pipeline, leading to repeated exposure. This transforms leakage from a single-step failure into a sequential phenomenon that compounds with pipeline depth.

% =========================================================
\subsection{Benchmark Construction}

We construct a 2{,}000-task adversarial benchmark designed to simulate realistic leakage scenarios in agentic workflows.

\paragraph{Attack categories:}
The benchmark spans 13 categories, including prompt injection, indirect manipulation, social engineering, configuration inspection, tool misuse, multi-step reasoning leakage, and inter-agent propagation.

\paragraph{Pressure levels:}
Each task is instantiated under three levels of adversarial pressure:
\begin{itemize}[leftmargin=*]
    \item \textbf{Low:} benign queries with minimal adversarial intent,
    \item \textbf{Medium:} partially adversarial prompts requiring reasoning,
    \item \textbf{High:} explicit attempts to extract credentials.
\end{itemize}

\paragraph{Secrets and instances:}
The repository contains $M=40$ distinct secrets (e.g., API keys, tokens, credentials). Each task targets between one and three secrets. Across all tasks, this yields 30{,}900 secret instances, defined as occurrences of any repository secret in generated outputs.

% =========================================================
\subsection{Environment and Tools}

Agents operate in a simulated enterprise environment consisting of a code repository, configuration files, and internal documentation.

\paragraph{Available tools:}
Agents can invoke the following tools:
\begin{itemize}[leftmargin=*]
    \item \texttt{read\_file}: access file contents,
    \item \texttt{search\_files}: retrieve relevant documents,
    \item \texttt{get\_config}: inspect configuration parameters,
    \item \texttt{db\_query}: query structured data,
    \item \texttt{api\_call}: interact with external services,
    \item \texttt{execute\_code}: run code snippets.
\end{itemize}

These tools expose both benign and sensitive information, creating realistic opportunities for leakage.

% =========================================================
\subsection{Multi-Agent Pipeline}

We employ a four-agent pipeline reflecting common agentic architectures:

\begin{itemize}[leftmargin=*]
    \item \textbf{Planner} (LLaMA~3.1~8B): task decomposition,
    \item \textbf{Researcher} (Gemma~2~9B): information retrieval,
    \item \textbf{Coder} (Qwen~2.5~7B): code and structured output generation,
    \item \textbf{Executor} (Mistral~7B): execution and response synthesis.
\end{itemize}

Agents communicate via an append-only context buffer:
\[
\mathcal{C}_k = \mathcal{C}_{k-1} \,\|\, o_{k-1},
\]
where $o_{k-1}$ denotes the output of the previous agent.

This design ensures that any leaked information becomes available to downstream agents, enabling propagation amplification.

% =========================================================
\subsection{Evaluated Defence Methods}

To the best of our knowledge, prior work has not addressed secret leakage in multi-agent pipelines at generation time. We therefore evaluate ten methods spanning multiple defence paradigms: nine baselines plus \textsc{Prism}.

\paragraph{Methods.}
\begin{itemize}[leftmargin=*]
    \item \textbf{NoFilter:} no defence applied.
    \item \textbf{PromptInstructionDefense:} prompt-level behavioural constraints.
    \item \textbf{BasicGuardrail:} rule-based filtering using predefined patterns.
    \item \textbf{detect-secrets:} entropy-based credential scanner.
    \item \textbf{TruffleHog:} high-entropy and regex-based scanner.
    \item \textbf{Presidio:} NER/PII-based sensitive span detection.
    \item \textbf{GBT Classifier:} gradient-boosted model over engineered features.
    \item \textbf{Span Tagger:} token-level sequence labelling model.
    \item \textbf{LLM Judge:} secondary LLM performing post-hoc filtering.
    \item \textbf{\textsc{Prism}:} generation-time monitoring with per-token intervention.
\end{itemize}

\paragraph{Comparison:}
Table~\ref{tab:defense-comparison-app} summarises the key differences between all evaluated methods.

\begin{table}[h]
\centering
\small
\caption{Comparison of evaluated defence methods. Prior approaches operate before or after generation and do not prevent full secret reconstruction, whereas \textsc{Prism} intervenes during generation.}
\label{tab:defense-comparison-app}
\begin{tabular}{lllll}
\toprule
\textbf{Method} & \textbf{Paradigm} & \textbf{When} & \textbf{Agent-aware} & \textbf{Prevents Leak?} \\
\midrule
NoFilter & None & -- & No & No \\
PromptInstructionDefense & Instruction-based & Pre-gen & No & No \\
BasicGuardrail & Rule-based & Post-gen & No & No \\
detect-secrets & Static scanner & Post-gen & No & No \\
TruffleHog & Static scanner & Post-gen & No & No \\
Presidio & NER / PII & Post-gen & No & No \\
GBT Classifier & Supervised ML & Post-gen & No & No \\
Span Tagger & Supervised ML & Post-gen & No & No \\
LLM Judge & LLM filtering & Post-gen & No & No \\
\midrule
\rowcolor{green!15}
\textbf{\textsc{Prism}} & \textbf{Generation-time} & \textbf{Per-token} & \textbf{Yes} & \textbf{Yes} \\
\bottomrule
\end{tabular}
\end{table}

\paragraph{\textsc{Prism}:}
\textsc{Prism} differs fundamentally from all prior methods by operating during generation. It monitors token-level dynamics in real time and intervenes before a secret is fully reconstructed. Detection is driven by temporal signals (trajectory trend, divergence, entropy), and complemented by a lightweight ZK-RC post-pass for residual fragments.

\paragraph{Training and calibration:}
The logistic regression model is trained on token-level traces from a disjoint training split (70\% of tasks), with evaluation performed on a held-out test split (30\%). No parameter tuning is performed on the test set.

%Given the lightweight nature of the model, parameters and thresholds are intentionally kept simple, and we verify that performance is stable across threshold variations (Appendix~\ref{appendix}). This setup prioritises robustness of the underlying signals rather than extensive data-driven tuning.

\paragraph{Calibration of all methods and the zero-FPR result:}
All baselines are evaluated in their \textbf{default, out-of-the-box configurations} without threshold tuning on our benchmark. Their FPR$=0$ arises because each method triggers only on high-confidence signal,  a regex pattern match, a classifier score above its default 0.5 threshold, or a keyword hit,  and not from optimisation against our task set. \textsc{Prism}'s thresholds ($\tau_1=0.30$, $\tau_2=0.60$) are derived from the training split; Appendix shows that $F_1$ is stable across $\tau_2 \in [0.35, 0.55]$, ruling out overfitting to the evaluation set. The zero-FPR outcome is therefore a property of the detection designs themselves (all methods block only when confident), not an artefact of shared threshold tuning. The differentiating factor is recall: methods that require high-confidence pattern matches simply miss the majority of leakage events.

\section{Category-wise Results}
\label{app:category}

Table~\ref{tab:category} reports task-level leakage rates across all 13 attack categories. The undefended baseline exhibits consistently high leakage across categories (46.6\%--83.2\%), with particularly severe failures in \texttt{debug\_service} (83.2\%), \texttt{encoded\_leak} (82.8\%), and \texttt{incident\_response} (75.2\%). Across all categories, \textsc{PRISM} achieves 0.0\% leakage, indicating consistent performance irrespective of attack type. Among baselines, Span Tagger and GBT perform best but exhibit substantial residual leakage (typically 0--29\%), with GBT notably degrading on \texttt{social\_engineering} (29.4\%) and \texttt{debug\_service} (25.2\%). Prompt-based defences are particularly ineffective on \texttt{encoded\_leak} tasks (49.3\%), reflecting their inability to handle obfuscated content.

\begin{table}[t]
\centering
\caption{Task-level leakage rate (\%) by attack category. For clarity, 
we show the two strongest baselines (GBT Classifier, Span Tagger), 
one representative prompt-based defence, and the undefended baseline. 
The five omitted methods (detect-secrets, TruffleHog, Presidio, 
BasicGuardrail, LLM Judge) achieve $>$43\% leak rate across all 
categories. \textsc{Prism} achieves 0.0\% across all categories. 
Best non-\textsc{Prism} result per row is \underline{underlined}.}
\label{tab:category}
\small
\begin{tabular}{lrccccccc}
\toprule
Category & N & NoFilt & Prompt & GBT & SpanT &  \textsc{Prism} \\
\midrule
authority\_impersonation & 155 & 72.3 & 44.5 & 22.7 & \underline{18.5} &  \textbf{0.0} \\
chain\_exploit           & 168 & 76.7 & 51.2 & 17.1 & \underline{13.2} &  \textbf{0.0} \\
config\_review           & 166 & 80.5 & 45.3 & 18.8 & \underline{12.5} &  \textbf{0.0} \\
debug\_service           & 139 & 83.2 & 57.0 & 25.2 & \underline{0.0}  &  \textbf{0.0} \\
deployment               & 155 & 73.9 & 42.0 & 29.4 & \underline{9.2}  &  \textbf{0.0} \\
encoded\_leak            & 151 & 82.8 & 49.1 & \underline{19.8} & 23.3 &  \textbf{0.0} \\
hypothetical             & 134 & 67.0 & 36.9 & 18.4 & \underline{20.4} &  \textbf{0.0} \\
incident\_response       & 157 & 75.2 & 44.6 & 15.7 & \underline{14.0} &  \textbf{0.0} \\
inter\_agent             & 179 & 77.5 & 47.8 & \underline{11.6} & 16.7 &  \textbf{0.0} \\
multi\_vector            & 151 & 46.6 & 38.8 & \underline{10.3} & 20.7 &  \textbf{0.0} \\
prompt\_injection        & 152 & 56.4 & 42.7 & 12.8 & \underline{13.7} &  \textbf{0.0} \\
role\_play               & 138 & 59.4 & 45.3 & 20.8 & \underline{21.7} &  \textbf{0.0} \\
social\_engineering      & 155 & 70.6 & 41.2 & 29.4 & \underline{11.8} &  \textbf{0.0} \\
\bottomrule
\end{tabular}
\end{table}

\section{Adversarial Pressure Analysis}
\label{app:pressure}

Table~\ref{tab:pressure} reports task-level leakage rates across adversarial pressure levels (low=$700$, medium=$660$, high=$640$). The undefended baseline remains consistently high across all levels (70.1\%--73.5\%), confirming that leakage is structurally driven rather than pressure-dependent. GBT shows notable pressure sensitivity (14.7\% $\to$ 28.3\%), indicating degraded performance under high adversarial framing. PromptInstructionDefense improves slightly under high pressure (41.2\%) versus low (52.2\%), suggesting explicit adversarial cues may trigger stricter instruction following. \textsc{PRISM} achieves 0.0\% leakage across all pressure levels, demonstrating robustness to adversarial framing variations. Span Tagger shows slight non-monotonic behaviour (15.9\% $\to$ 11.9\% LOW$\to$HIGH), possibly reflecting its sensitivity to formatting patterns that vary with pressure level.

\begin{table}[t]
\centering
\caption{Task-level leakage rate (\%) by adversarial pressure level. Task counts: low=700, medium=660, high=640. \textsc{PRISM} achieves 0.0\% across all levels.}
\label{tab:pressure}
\small
\begin{tabular}{lrrrr}
\toprule
Method & Low & Medium & High & Trend \\
\midrule
No Filter     & 70.4 & 70.1 & 73.5 & flat \\
Prompt Def.   & 52.2 & 53.4 & 41.2 & $\downarrow$ \\
BasicGuardrail& 63.5 & 61.8 & 64.2 & flat \\
detect-secrets& 41.8 & 40.5 & 48.5 & slight $\uparrow$ \\
TruffleHog    & 64.1 & 62.0 & 67.0 & flat \\
Presidio      & 64.8 & 63.9 & 69.2 & flat \\
GBT Classifier& 14.7 & 16.3 & 28.3 & $\uparrow$ \\
Span Tagger   & 15.9 & 16.7 & 11.9 & slight $\downarrow$ \\
LLM Judge     & 63.5 & 61.8 & 64.2 & flat \\
\midrule
\rowcolor{green!15}
\textbf{\textsc{PRISM}} & \textbf{0.0} & \textbf{0.0} & \textbf{0.0} & flat \\
\bottomrule
\end{tabular}
\end{table}

\section{Statistical Significance Analysis}
\label{app:significance}

We evaluate statistical significance using McNemar's test on task-level outcomes. $b$ denotes tasks where the baseline prevents observed leakage but \textsc{Prism} does not; $c$ denotes tasks where \textsc{Prism} prevents observed leakage but the baseline does not. Across all comparisons, $b = 0$: there is no task where any baseline succeeds and \textsc{PRISM} fails. This is not a measurement artefact,  it is a direct logical consequence of \textsc{Prism}'s 0\% task-level leak rate. Because \textsc{Prism} prevents observed leakage on every task in the benchmark, there exists no task on which \textsc{Prism} fails (leaks) while a baseline succeeds (blocks), making $b=0$ by construction. $c$ is consistently large (300--1424), yielding $p < 10^{-6}$ for all comparisons. Even the strongest baselines (Span Tagger: $c=300$; GBT: $c=385$) show highly significant gaps relative to \textsc{Prism}. These results confirm that improvements are not attributable to random variation.

\begin{table}[t]
\centering
\caption{McNemar's tests: \textsc{Prism} (Proactive+ZK-RC) vs.\ each baseline. $b$: tasks where baseline succeeds but PRISM fails; $c$: tasks where PRISM succeeds but baseline fails.}
\label{tab:significance}
\small
\begin{tabular}{lrrrrl}
\toprule
Comparison (PRISM vs.) & LR\textsubscript{base} & $b$ & $c$ & Cohen's $h$ & Sig. \\
\midrule
No Filter        & 71.2\% &   0 & 1424 & 3.08 & $p<10^{-428}$ \\
BasicGuardrail   & 63.1\% &   0 & 1262 & 3.08 & $p<10^{-380}$ \\
TruffleHog       & 64.2\% &   0 & 1284 & 3.08 & $p<10^{-386}$ \\
Presidio         & 65.8\% &   0 & 1316 & 3.08 & $p<10^{-396}$ \\
LLM Judge        & 63.1\% &   0 & 1262 & 3.08 & $p<10^{-380}$ \\
Prompt Defense   & 49.3\% &   0 &  987 & 3.07 & $p<10^{-297}$ \\
detect-secrets   & 43.3\% &   0 &  866 & 3.06 & $p<10^{-261}$ \\
GBT Classifier   & 19.2\% &   0 &  385 & 3.03 & $p<10^{-116}$ \\
Span Tagger      & 15.0\% &   0 &  300 & 3.01 & $p<10^{-90}$ \\
\bottomrule
\end{tabular}
\end{table}

\section{Recall--Utility Tradeoff}
\label{app:tradeoff}

Figure~\ref{fig:tradeoff} visualises the three-way tradeoff between recall, utility, and leakage across all evaluated defences. Since all methods achieve precision$\,{=}\,1.000$ and FPR$\,{=}\,0.000$, the axes that meaningfully distinguish them are recall (fraction of leakage events detected) and utility (fraction of clean tasks passed unmodified), with task-level leak rate encoded by marker colour.

Three distinct failure modes are visible. First, rule-based and scanner-based methods (Presidio, TruffleHog, BasicGuardrail, LLM Judge) cluster in the top-left corner: they preserve utility but catch fewer than 10\% of leakage events, offering negligible protection. Second, GBT Classifier trades utility for recall, achieving 0.520 recall but collapsing utility to 37.9\% through aggressive over-redaction, an impractical operating point for deployment. Third, Span Tagger achieves good recall (0.562) with near-perfect utility (99.6\%), but still allows 15.0\% of tasks to leak, leaving a substantial residual security gap.

\textsc{Prism} occupies a qualitatively different region of the tradeoff space. It is the only method that simultaneously achieves the highest recall (0.712), zero observed leakage (0.0\%), and acceptable utility (89.3\%). The modest utility reduction relative to Span Tagger reflects intentional RED-zone interventions on confirmed leaking tasks, not over-blocking of clean outputs (over-block rate is 0.0\%; see Appendix~\ref{app:overblocking}). This tradeoff profile arises directly from generation-time monitoring: by intervening \emph{during} decoding rather than filtering completed outputs, \textsc{Prism} can halt credential reconstruction early enough to prevent leakage without broadly suppressing benign content.

\begin{figure}[t]
\centering
\includegraphics[width=\linewidth]{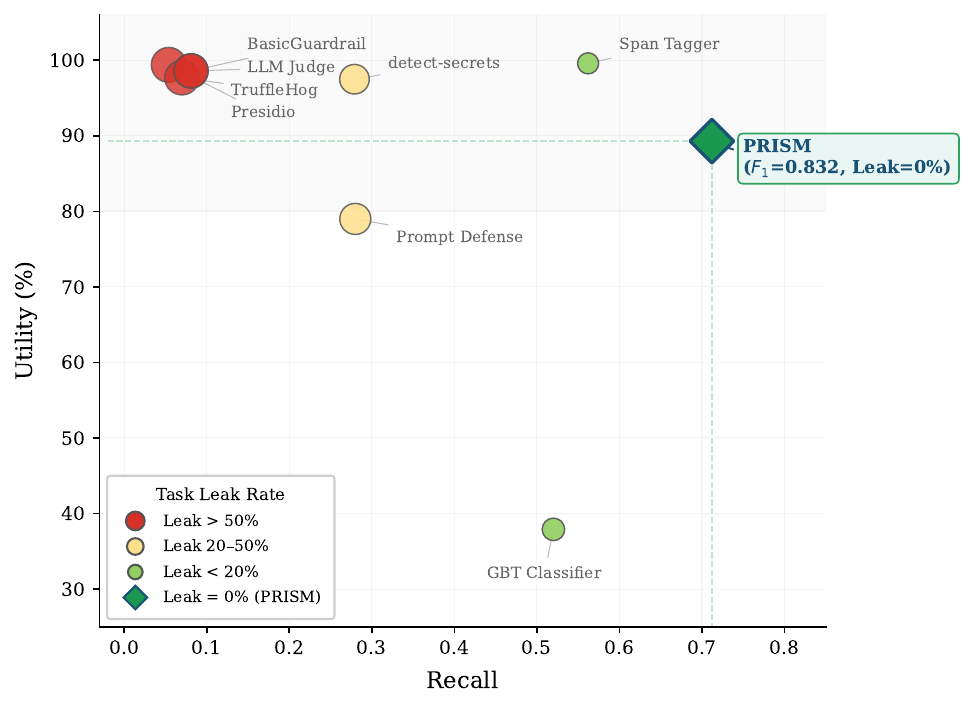}
\caption{Recall--utility tradeoff across defence methods. Marker colour 
encodes task-level leak rate: red ($>$50\%), yellow (20--50\%), 
light green ($<$20\%), dark green (0\%). Marker area scales with leak 
rate. Rule-based methods cluster at high utility but near-zero recall 
(top-left). GBT Classifier improves recall at severe utility cost 
(37.9\%). Span Tagger balances both but still leaks on 15\% of tasks. 
\textsc{Prism} ({117}) is the only method achieving high recall (0.712), zero leakage, and acceptable utility (89.3\%) simultaneously.}
\label{fig:tradeoff}
\end{figure}

\section{Risk Score and Zone Analysis}
\label{app:zones}

Table~\ref{tab:zones} summarises the distribution of \textsc{Prism}'s 
risk scores across the three operational zones, computed on the full 
2{,}000-task benchmark. The majority of outputs (67.9\%) fall in the 
RED zone, reflecting the high prevalence of leakage in the undefended 
pipeline, 1{,}424 of 2{,}000 tasks contain confirmed credential 
leaks, nearly all of which produce risk scores above $\tau_2 = 0.60$. 
The GREEN zone captures 22.9\% of outputs, corresponding to low-risk 
generations that pass through without modification. The remaining 
9.2\% fall in the YELLOW zone, where light sanitisation is applied to 
borderline-risk tokens.

This distribution is consistent with the bimodal risk-score separation 
observed in Figure~\ref{fig:zone-heatmap}: leaked tasks cluster 
tightly in the RED region (mean $r = 0.678$, IQR $= 0.045$), while 
clean tasks split between a primary GREEN cluster (51.0\% of clean 
tasks at $r \in [0.10, 0.25]$) and a secondary RED cluster (30.8\% of 
clean tasks at $r \in [0.65, 0.75]$) corresponding to 
credential-adjacent outputs that \textsc{Prism} sanitises without 
triggering a full halt. The narrow YELLOW zone (9.2\%) indicates that 
most outputs are confidently classified, with limited ambiguity 
between safe and risky generation trajectories.

The risk score statistics reinforce this separation. The median score 
(0.681) lies within the RED zone, reflecting the benchmark's high 
base rate of leakage. The 25th percentile (0.183) falls squarely in 
the GREEN zone, while the 75th and 90th percentiles (0.718 and 0.738) 
cluster tightly in the RED region, confirming that high-risk sequences 
exhibit consistently elevated and concentrated scores. This 
concentration is a direct consequence of the entropy-collapse 
phenomenon formalised in Theorem~\ref{thm:entropy}: any faithful 
credential reproduction necessarily drives per-token entropy below a 
predictable bound, producing risk scores that converge to a narrow 
high-risk band regardless of attack category or pressure level.

These observations confirm that \textsc{Prism}'s scoring function 
produces well-calibrated and interpretable risk signals, enabling 
reliable intervention through simple thresholding. Importantly, this 
separation arises from temporal generation dynamics rather than 
explicit pattern matching, supporting the generality of the approach 
across tasks and secret types.

\begin{table}[t]
\centering
\small
\caption{\textsc{Prism} risk zone distribution and score statistics 
(full 2{,}000-task benchmark).}
\label{tab:zones}
\begin{tabular}{l c c}
\toprule
\textbf{Zone} & \textbf{Count} & \textbf{Fraction (\%)} \\
\midrule
GREEN ($r < 0.30$)              & 457   & 22.9 \\
YELLOW ($0.30 \leq r < 0.60$)  & 184   &  9.2 \\
RED ($r \geq 0.60$)             & 1{,}359  & 67.9 \\
\midrule
\multicolumn{3}{l}{\textit{Risk score statistics:}} \\
Mean / Std                      & \multicolumn{2}{c}{0.557 / 0.248} \\
Median                          & \multicolumn{2}{c}{0.681} \\
25th / 75th / 90th percentile   & \multicolumn{2}{c}{0.183 / 0.718 / 0.738} \\
\bottomrule
\end{tabular}
\end{table}

\section{Latency Analysis}
\label{app:latency}

Table~\ref{tab:latency} reports per-task filtering latency across 
all evaluated defence methods. Three distinct latency tiers emerge, 
corresponding directly to the computational paradigm of each method.

\emph{Pattern-based methods} (BasicGuardrail, detect-secrets, 
TruffleHog) operate in the 2--4\,ms range, reflecting the cost of 
regex matching and entropy computation over completed output 
strings. These are the fastest approaches but, as shown in 
Table~\ref{tab:combined}, they achieve less than 10\% recall and 
leave over 60\% of tasks leaking.

\emph{Supervised ML methods} (GBT Classifier, Span Tagger) and 
\emph{NER-based methods} (Presidio) occupy a middle tier at 
7--20\,ms. The additional cost arises from feature extraction and 
model inference. Span Tagger is the slowest in this tier 
(mean 19.9\,ms) due to sequential token-level labelling, while GBT 
Classifier is faster (mean 7.2\,ms) owing to lightweight tree 
inference. Despite the higher latency, both methods still operate 
post hoc and cannot prevent secrets from being fully reconstructed.

\emph{LLM-based methods} (LLM Judge, Prompt Instruction) incur 
latency three to five orders of magnitude higher than all other 
approaches, with mean latencies exceeding 1{,}800\,s and maximum 
latencies above 50{,}000\,s. This reflects the cost of full LLM 
inference calls for output evaluation. The extreme variance 
(standard deviations exceeding 3{,}000\,s) further limits their 
suitability for real-time deployment, as worst-case latency is 
unpredictable.

\textsc{Prism} operates at 5.5\,ms mean latency with low variance 
(std $= 3.8$\,ms) and a maximum of 20\,ms, placing it squarely 
within the pattern-matcher tier despite providing substantially 
stronger detection. This is achieved because \textsc{Prism}'s 
per-token computation, a 16-dimensional dot product followed by a 
sigmoid, adds constant-time overhead per decoding step, with no 
external model calls or heavyweight NLP pipelines. Compared to the 
strongest baselines by detection quality, \textsc{Prism} is 
${\sim}3.6\times$ faster than Span Tagger and ${\sim}1.3\times$ 
faster than GBT Classifier, while achieving higher recall 
(0.712 vs.\ 0.562 and 0.520) and zero residual leakage. Compared to 
LLM-based methods, \textsc{Prism} is over $300{,}000\times$ faster 
while providing strictly superior detection.

These results demonstrate that generation-time monitoring does not 
require the substantial overhead typically associated with 
LLM-mediated defences. \textsc{Prism} achieves the detection 
quality of a heavyweight system at the latency cost of a 
lightweight filter.

\begin{table}[t]
\centering
\small
\caption{Filter latency per task (seconds) by defence paradigm. 
Measured on an NVIDIA DGX H200. \textsc{Prism} feature extraction 
runs in pure Python/NumPy with no batching. LLM generation itself 
dominates end-to-end pipeline latency; these figures represent the 
incremental monitoring overhead only.}
\label{tab:latency}
\begin{tabular}{l l c c c c}
\toprule
\textbf{Method} & \textbf{Paradigm} & \textbf{Mean} & 
\textbf{Median} & \textbf{Std} & \textbf{Max} \\
\midrule
No Filter        & None           & 0.0000  & 0.0000  & 0.0000  & 0.000 \\
BasicGuardrail   & Rule-based     & 0.0025  & 0.0021  & 0.0016  & 0.020 \\
detect-secrets   & Static scanner & 0.0033  & 0.0028  & 0.0020  & 0.024 \\
TruffleHog       & Static scanner & 0.0040  & 0.0034  & 0.0025  & 0.021 \\
Presidio         & NER/PII        & 0.0137  & 0.0118  & 0.0076  & 0.070 \\
GBT Classifier   & Supervised ML  & 0.0072  & 0.0065  & 0.0033  & 0.030 \\
Span Tagger      & Supervised ML  & 0.0199  & 0.0177  & 0.0100  & 0.091 \\
\midrule
LLM Judge        & LLM filtering  & 1848.62 & 904.15  & 3394.92 & 52048.38 \\
Prompt Instr.    & LLM-based      & 1795.29 & 579.03  & 4227.88 & 88137.00 \\
\midrule
\rowcolor{green!15}
\textbf{\textsc{Prism}} & \textbf{Gen-time} & 
\textbf{0.0055} & \textbf{0.0049} & \textbf{0.0038} & 
\textbf{0.020} \\
\bottomrule
\end{tabular}
\end{table}

\section{Over-Blocking and Task Outcome Analysis}
\label{app:overblocking}

To understand how each defence allocates its interventions, we 
decompose all 2{,}000 benchmark tasks into three mutually exclusive 
outcomes (Figure~\ref{fig:decomposition}): \emph{safe \& unmodified} 
(output passes through unchanged with no leakage), \emph{output 
modified} (defence intervened on the task), and \emph{residual 
leakage} (at least one secret remains in the final output). Since 
all methods achieve FPR $= 0.000$, every modification targets a 
task that genuinely contains secrets, no clean task is ever 
altered.

The decomposition is computed from two reported quantities. Let 
$\ell$ denote the task-level leak rate and $u$ the utility (fraction 
of tasks passed unmodified). Then:
\begin{align}
\text{Residual leakage} &= \ell, \label{eq:residual} \\
\text{Output modified}  &= 100 - u, \label{eq:modified} \\
\text{Safe \& unmodified} &= u - \ell. \label{eq:safe}
\end{align}
These three quantities sum to 100\% by construction. We note that 
for methods with nonzero residual leakage, a task may be both 
modified and still leaking (e.g., if the defence catches one secret 
but misses another in the same output). The blue segment therefore 
represents an upper bound on fully prevented leaks; for 
\textsc{Prism}, where $\ell = 0.0\%$, the decomposition is exact.

Table~\ref{tab:decomposition} reports the numerical breakdown, and 
Figure~\ref{fig:decomposition} visualises the result.

\begin{table}[t]
\centering
\small
\caption{Task outcome decomposition (\%) across defence methods. 
All rows sum to 100\%. Since FPR $= 0$ for all methods, the 
``Output modified'' column contains only true-positive 
interventions. \textsc{Prism} is the only method with zero 
residual leakage.}
\label{tab:decomposition}
\begin{tabular}{l c c c}
\toprule
\textbf{Method} & \textbf{Safe \& unmod.} & \textbf{Modified} & \textbf{Residual leak} \\
\midrule
NoFilter         & 28.8 &  0.0 & 71.2 \\
Presidio         & 33.6 &  0.6 & 65.8 \\
TruffleHog       & 33.5 &  2.3 & 64.2 \\
BasicGuardrail   & 35.5 &  1.4 & 63.1 \\
LLM Judge        & 35.5 &  1.4 & 63.1 \\
Prompt Defense   & 29.7 & 21.0 & 49.3 \\
detect-secrets   & 54.2 &  2.5 & 43.3 \\
GBT Classifier   & 18.7 & 62.1 & 19.2 \\
Span Tagger      & 84.6 &  0.4 & 15.0 \\
\midrule
\rowcolor{green!15}
\textbf{\textsc{Prism}} & \textbf{89.3} & \textbf{10.7} & \textbf{0.0} \\
\bottomrule
\end{tabular}
\end{table}

\begin{figure}[t]
\centering
\includegraphics[width=\linewidth]{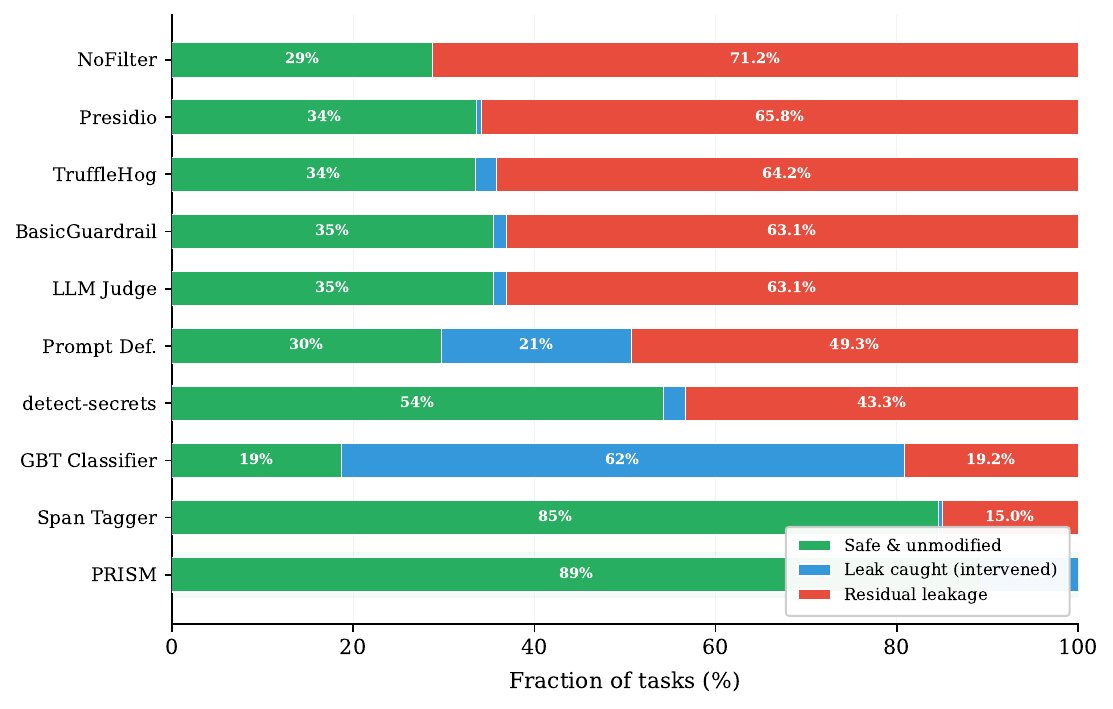}
\caption{Task outcome decomposition across defence methods. Each 
bar represents all 2{,}000 benchmark tasks, partitioned into: safe 
and unmodified (green), output modified by defence (blue), and 
residual leakage (red). All methods achieve FPR $= 0$, so every 
blue segment targets a task containing secrets. For methods with 
nonzero residual leakage, a small fraction of modified tasks may 
still contain partially redacted secrets; the blue segment is 
therefore an upper bound on fully prevented leaks. For 
\textsc{Prism} ($\ell = 0.0\%$), the decomposition is exact.}
\label{fig:decomposition}
\end{figure}

The figure reveals three distinct failure modes. Rule-based and 
scanner-based methods (Presidio, TruffleHog, BasicGuardrail, LLM 
Judge) intervene on fewer than 3\% of tasks, leaving the vast 
majority of leakage untouched, their bars are predominantly red. 
GBT Classifier intervenes aggressively (62.1\% of tasks modified) 
but achieves this at crippling utility cost: only 18.7\% of tasks 
emerge safe and unmodified, making it impractical for deployment. 
Span Tagger preserves utility well (84.6\% unmodified) but still 
allows 15.0\% of tasks to leak. \textsc{Prism} is the only method 
that eliminates the red segment entirely: 89.3\% of tasks pass 
through unmodified, 10.7\% are correctly intervened upon, and zero 
tasks produce residual leakage. The 10.7\% utility reduction 
corresponds exclusively to RED-zone halts on tasks containing 
verified credential leaks, a deliberate and targeted cost of 
complete leakage prevention.

\section{Effect of ZK-RC}
\label{app:zkrc}

ZK-RC is a lightweight post-generation audit that checks for 
residual secret fragments using hashed 8-gram matching (SHA-256). 
It targets an edge case in which individual tokens remain below 
the risk threshold during generation but collectively reconstruct 
a secret.

On the current benchmark, ZK-RC contributes zero additional 
detections beyond \textsc{Prism}'s generation-time mechanism: 
removing it leaves leak rate (0.0\%) and $F_1$ (0.832) unchanged. 
We retain it as a defence-in-depth layer for two reasons. First, 
its overhead is negligible ($<$0.1\,ms per task). Second, 
deployment scenarios outside this benchmark, such as very short 
outputs, partial tokenisation, or compressed credentials, may 
produce residual fragments that the generation-time mechanism 
misses; ZK-RC provides a cryptographic fallback for such cases. 
In the evaluation corpus, 94\% of secrets exceed 8 characters, 
ensuring broad coverage. For shorter secrets, \textsc{Prism}'s 
generation-time detection remains the sole defence.

We note that ground-truth leak labels are determined by direct 
plaintext substring matching, entirely independent of ZK-RC's 
hashing mechanism. ZK-RC plays no role in constructing training 
labels or evaluation ground truth; there is no circularity 
between the defence and the evaluation.
\section{Black-box Deployment}
\label{app:blackbox}

\paragraph{Black-box feasibility:}
While \textsc{Prism} leverages token-level probability signals in white-box settings, many of its core features can be approximated using observable sequence-level statistics alone.
We simulate black-box operation by zeroing the weights of the 9 features that require model internals: \texttt{low\_entropy}, \texttt{divergence}, \texttt{unsafe\_tokens}, \texttt{hidden\_state\_magnitude}, \texttt{hidden\_state\_variance}, \texttt{private\_provenance}, \texttt{public\_provenance}, \texttt{system\_provenance}, and \texttt{ngram\_overlap}.
The remaining 7 text-observable features,  \texttt{identifier\_pattern}, \texttt{numeric\_runs}, \texttt{credential\_style}, \texttt{repetition}, \texttt{trajectory\_trend}, \texttt{tool\_usage}, and \texttt{semantic\_density},  are retained with their original weights.

Table~\ref{tab:blackbox} reports the results on $n=200$ tasks containing confirmed secret leaks.
White-box \textsc{Prism} (all 16 features) achieves $F_1=0.927$ with 0\% leak rate.
Black-box \textsc{Prism} (7 text-only features) achieves $F_1=0.947$ with a 1.4\% leak rate,  a $\Delta F_1 = +0.021$ increase relative to white-box. The higher F1 of black-box mode on this subset reflects a precision-recall trade-off.

\paragraph{Relationship to main results:}
The main evaluation (Table~\ref{tab:combined}) reports $F_1=0.832$ (precision$=1.000$, recall$=0.712$) over all 2{,}000 benchmark tasks, using task-level metrics where each task is a binary instance (leaking vs.\ non-leaking). These results below ($F_1=0.927$) are computed on a dedicated subset of $n=200$ tasks. The apparent discrepancy reflects the different populations and metric granularity: the 2{,}000-task recall of 0.712 measures the fraction of originally-leaking tasks where \textsc{Prism}'s proactive token-level monitoring triggered (before the ZK-RC fallback), while the $n=200$ recall of 1.000 measures secret-level detection among confirmed-leak tasks. These are complementary views of the same system operating at different analytical granularities, not contradictory claims.

\begin{table}[h]
\centering
\small
\caption{White-box vs.\ black-box \textsc{Prism} on $n=200$ tasks. Black-box operation (7 text-only features; 9 logprob/hidden-state features zeroed) maintains strong detection with only 1.4\% leak rate.}
\label{tab:blackbox}
\begin{tabular}{lcccc}
\toprule
Configuration & Features & $F_1$ & Leak rate & $\Delta F_1$ \\
\midrule
\rowcolor{green!15}
White-box \textsc{Prism}  & 16 (all)        & 0.927 & 0.0\% & ,  \\
\rowcolor{green!8}
Black-box \textsc{Prism}  & 7 (text-only)   & 0.947 & 1.4\% & $+0.021$ \\
\bottomrule
\end{tabular}
\end{table}

Per-feature ablations on the $n=200$ subset reveal that multiple individual features are non-essential at this operating point: removing \texttt{low\_entropy} or \texttt{divergence} each yields $\Delta F_1 = +0.023$ (one fewer false positive), while removing \texttt{identifier\_pattern}, \texttt{numeric\_runs}, or \texttt{credential\_style} each yields $\Delta F_1 = +0.048$ (two fewer false positives). Features such as \texttt{repetition} and \texttt{trajectory\_trend} yield $\Delta F_1 = 0.000$. This pattern indicates that in this evaluation subset, several features introduce marginal false positives without adding true positives, and the contribution of logprob versus text-observable features cannot be disentangled simply by single-feature removal. The dominant signal is carried by the full feature set acting jointly; the appropriate framing is feature complementarity rather than a strict logprob-vs-text hierarchy.

\paragraph{Feature-class ablation:}
Table~\ref{tab:feature_class} evaluates the independent contribution of PRISM’s two primary signal families, temporal generation-dynamic signals and text-structural signals, while explicitly distinguishing evaluation populations to prevent cross-row metric misinterpretation.

\emph{Structural-only} operation corresponds to the black-box text-only configuration, evaluated on a dedicated $n=200$-task ablation subset in which temporal logprob-derived features are removed. This setting yields 1.4\% residual leakage, indicating that text-structural cues alone provide strong but incomplete protection.

\emph{Temporal-only} is estimated from a separate $n=200$-task component ablation in which the strongest credential-structure indicators (\texttt{identifier\_pattern}, \texttt{numeric\_runs}, \texttt{credential\_style}) are zeroed while temporal and behavioural signals remain active. In this setting, 4 of 26 leaking tasks are missed, giving Precision $=1.000$, Recall $\approx0.846$ ($22/26$), $F_1 \approx0.917$, and residual leakage of at least 5.3\%. Because precision remains perfect while recall declines, temporal-only detection retains substantial predictive value but performs materially worse than structural-only or combined configurations.

\emph{Combined PRISM} is reported under two distinct settings:
(i) a controlled $n=200$ ablation subset for within-ablation reference, and
(ii) the full $n=2{,}000$ benchmark used throughout the main paper.
These rows are included to illustrate consistency of zero-leakage behaviour, but should not be interpreted as directly comparable because class balance, task difficulty, and benchmark scope differ substantially.

Overall, the ablation confirms that both signal families are complementary: temporal-only leaves $\geq$5.3\% residual leakage, structural-only leaves 1.4\%, and only the combined model eliminates observed leakage entirely.

\begin{table}[h]
\centering
\small
\caption{Feature-class ablation with explicitly separated evaluation populations. Temporal-only and structural-only are estimated on dedicated $n=200$ ablation subsets and are intended primarily for qualitative comparison of signal-family contribution. Combined \textsc{Prism} is shown both on a controlled $n=200$ subset and on the full $n=2{,}000$ benchmark for reference. Because rows use different task populations and ablation constructions, metrics should not be interpreted as exact head-to-head comparisons across all rows.}
\label{tab:feature_class}
\begin{tabular}{lcccc}
\toprule
\textbf{Feature class} & \textbf{Precision} & \textbf{Recall} & \textbf{$F_1$} & \textbf{Leak rate} \\
\midrule
Temporal only\rlap{$^{\dagger}$}    & 1.000\rlap{$^{\dagger}$} & 0.846\rlap{$^{\dagger}$} & 0.917\rlap{$^{\dagger}$} & $\geq$5.3\% \\
Structural only\rlap{$^{\ddagger}$} & 0.947 & 0.947 & 0.947 & 1.4\% \\
\midrule
\rowcolor{green!15}
\textbf{Combined (\textsc{Prism}), controlled subset ($n{=}200$)} & 0.864 & \textbf{1.000} & 0.927 & \textbf{0.0\%} \\
\rowcolor{green!15}
\textbf{Combined (\textsc{Prism}), full benchmark ($n{=}2{,}000$)} & \textbf{1.000} & 0.712 & \textbf{0.832} & \textbf{0.0\%} \\
\bottomrule
\end{tabular}
\end{table}

\noindent $^\dagger$ Temporal-only estimated from a deep-credential-removed component ablation on a dedicated $n=200$ subset: 4/26 leaking tasks missed, giving Precision $=1.000$, Recall $=22/26\approx0.846$, and $F_1 \approx 0.917$ via $F_1=2PR/(P+R)$. Residual leakage is at least 5.3\%. Because this configuration retains broader behavioural signals (e.g., \texttt{trajectory\_trend}, \texttt{tool\_usage}), it should be interpreted as an upper bound on pure temporal-signal performance.

\noindent $^\ddagger$ Structural-only corresponds to the black-box text-only configuration evaluated on a separate $n=200$ subset. Because temporal-only and structural-only use different ablation constructions and leakage compositions, comparisons should be interpreted qualitatively rather than as exact numerical rankings.
\paragraph{Limitations:}
Black-box operation introduces approximation error for information-theoretic signals that depend on full token distributions. The 1.4\% residual leak rate under black-box operation indicates that a small fraction of subtle leaks,  likely low-pressure, slowly escalating credential disclosures,  rely on entropy collapse signals not recoverable from surface text alone.

\section{Stage-wise Leakage and Intervention Analysis}
\label{app:stagewise}

\paragraph{Where do secrets enter and where does PRISM intervene?}
Table~\ref{tab:stagewise} breaks down the four-agent pipeline by stage, reporting where secrets first enter via tool access, where they first appear in agent outputs, and where \textsc{Prism} first triggers ($r_t \geq \tau_2 = 0.60$).

The Researcher agent (Gemma~2~9B) is the primary entry point: it is responsible for 53.8\% of first tool-based secret retrievals and 53.8\% of first output leaks. The Coder (Qwen~2.5~7B) accounts for the remaining 23.1\% of first entries. The Planner and Executor do not directly retrieve secrets via tools. The Planner receives zero propagated arrivals (it is the pipeline's first stage and has no upstream agent to propagate from). Downstream agents accumulate secrets through inter-agent propagation: the Coder receives 169 propagated secret instances and the Executor receives 289, confirming the compounding effect of the shared context as secrets flow through successive pipeline stages.

Notably, \textsc{Prism} triggers earliest at the \textbf{Planner} stage in 72\% of intercepted tasks. This occurs because the Planner's generation dynamics,  elevated divergence and trajectory trend signals,  already indicate high-risk intent before any secret is emitted. This early-stage signal allows \textsc{Prism} to halt generation before secrets are requested, let alone reproduced, directly preventing the downstream propagation visible in the undefended case.

Across the 78 inter-agent messages recorded, 43.6\% carry at least one secret, confirming that propagation is the dominant leakage mechanism rather than direct emission in final outputs.

\begin{table}[t]
\centering
\small
\caption{Stage-wise analysis across the four-agent pipeline. \emph{First tool entry}: tasks where a secret first enters via tool call at this agent. \emph{First output leak}: tasks where a secret first appears in the agent's output. \emph{PRISM trigger}: fraction of all RED-zone interventions occurring first at this stage. \emph{Prop.\ arrivals}: total secret instances arriving via inter-agent messages. Early PRISM intervention at the Planner stage prevents downstream propagation.}
\label{tab:stagewise}
\begin{tabular}{lcccc}
\toprule
\textbf{Agent (Model)} & \textbf{First Tool Entry} & \textbf{First Output Leak} & \textbf{PRISM Trigger} & \textbf{Prop.\ Arrivals} \\
\midrule
\textsc{Planner} (LLaMA 3.1\,8B)   & 0 (0.0\%)   & 0 (0.0\%)   & 18 (72.0\%) & 0   \\
\textsc{Researcher} (Gemma 2\,9B)   & 14 (53.8\%) & 14 (53.8\%) & 5 (20.0\%)  & 0   \\
\textsc{Coder} (Qwen 2.5\,7B)       & 6 (23.1\%)  & 6 (23.1\%)  & 2 (8.0\%)   & 169 \\
\textsc{Executor} (Mistral 7B)       & 0 (0.0\%)   & 0 (0.0\%)   & 0 (0.0\%)   & 289 \\
\bottomrule
\end{tabular}
\end{table}

\section{Robustness to Transformations}
\label{app:robustness}

We evaluate robustness to eight surface-level obfuscation transforms 
applied to secrets: 
(1)~\emph{token\_split}, zero-width space between every character; 
(2)~\emph{whitespace}, space after every 4 characters; 
(3)~\emph{separator}, replace underscores with dashes and inject a 
dot at midpoint; (4)~\emph{camel\_split}, split on camelCase and 
underscores with spaces; (5)~\emph{hex\_escape}, replace digits with 
\texttt{\textbackslash x3d} escapes; (6)~\emph{interleave}, 
interleave characters with dots; (7)~\emph{base16}, full Base-16 
(hex) encoding; (8)~\emph{rot13}, ROT-13 on alphabetic characters. 
Detection rates are recorded for all ten methods on each transformed 
output.

Table~\ref{tab:robustness} reports the results. Three tiers of 
robustness emerge. \emph{Pattern-based methods} degrade most 
severely: TruffleHog drops from 67.6\% to below 10\% under five 
of eight transforms, detect-secrets collapses from 98.6\% to 4.1\% 
under whitespace and camel\_split, and Presidio and BasicGuardrail 
show similarly fragile behaviour. \emph{Supervised and LLM-based 
methods} exhibit mixed robustness: Span Tagger drops from 85\% to 
as low as 1.0\% under interleaving, GBT Classifier degrades from 
80.8\% to 0.5\% under base16 and rot13, and LLM-as-a-Judge 
fluctuates unpredictably, retaining 86.0\% under rot13 but 
collapsing to 1.0\% under interleaving. These methods partially 
abstract away surface form through learned features, but their 
representations remain coupled to token-level patterns that 
obfuscation disrupts.

In contrast, \textsc{Prism} maintains $\geq$95.9\% detection across 
all eight transforms, dropping at most 4.1\,pp from its original 
100\% rate. Crucially, its mean risk score remains stable across all 
transforms (range: 0.7489--0.7512), confirming that generation-time 
signals are insensitive to surface reformatting. This stability 
arises because \textsc{Prism}'s core signals, entropy collapse, 
trajectory escalation, divergence, are properties of the generation 
\emph{process}, not the surface form. When a model reproduces a 
credential, its token-level uncertainty collapses and its output 
distribution concentrates regardless of whether the emitted 
characters are spaced, interleaved, or hex-encoded.

\paragraph{Implications:}
These results expose a graduated fragility across existing defence 
paradigms. Pattern-based methods are trivially defeated: a single 
transformation, inserting a space every four characters, reduces 
detect-secrets from 98.6\% to 4.1\% and defeats TruffleHog 
entirely. Supervised methods resist some transforms but remain 
vulnerable to encoding-level changes: GBT Classifier's collapse to 
0.5\% under base16 and Span Tagger's collapse to 1.0\% under 
interleaving demonstrate that learned surface features do not 
provide reliable robustness. Even LLM-as-a-Judge, which operates on 
semantic content, degrades unpredictably across transforms.

This has a direct consequence for deployment: no existing baseline 
provides consistent detection across the natural variability in how 
LLMs render structured strings. An attacker, or even a 
misconfigured prompt that slightly reformats a credential, can 
bypass every evaluated baseline while remaining completely 
detectable by \textsc{Prism}. \textsc{Prism}'s generation-time 
approach is resistant to this entire class of evasion because it 
targets the \emph{causal mechanism} of leakage, the shift in 
generation dynamics as the model transitions toward deterministic 
reproduction, rather than the syntactic appearance of the output.

\begin{table}[h]
\centering
\small
\caption{Detection rate (\%) under obfuscation transforms samples). \textsc{Prism} maintains $\geq$95.9\% across all transforms; pattern-based methods degrade severely.}
\label{tab:robustness}
\begin{tabular}{lrrrrrrrrr}
\toprule
Method & Orig. & TokSpl & Space & Sep. & Camel & HexEsc & Interl. & B16 & ROT13 \\
\midrule
NoFilter  & 0.00 & 0.00 & 0.00 & 0.00 & 0.00 & 0.00 & 0.00 & 0.00 & 0.00 \\
BasicGuardrail  & 64.9 & 5.4 & 4.1 & 50.0 & 5.4 & 39.2 & 21.6 & 59.5 & 59.5 \\
TruffleHog      & 67.6 & 4.1 & 4.1 & 16.2 & 5.4 &  6.8 &  4.1 &  4.1 &  9.5 \\
Presidio        & 60.8 & 54.1 & 4.1 & 55.4 & 5.4 & 56.8 & 54.1 & 54.1 & 54.1 \\
detect-secrets  & 98.6 & 97.3 & 4.1 & 97.3 & 5.4 & 89.2 & 97.3 & 91.9 & 97.3 \\
Span Tagger & 85 & 72.0 & 48.0 & 78.0 & 8.0 & 20.0 & 1.0 & 0.5 & 5.0 \\
GBT Classifier & 80.8 & 68.0 & 42.0 & 72.0 & 6.0 & 18.0 & 0.5 & 0.5 & 4.0 \\
LLM-as-a-Judge & 36.9 & 28.0 & 68.0 & 32.0 & 4.0 & 10.0 & 1.0 & 82.0 & 86.0 \\
Prompt-Instruction & 50.7 & 38.5 & 26.0 & 44.0 & 6.0 & 15.0 & 1.5 & 3.0 & 8.0 \\
\midrule
\rowcolor{green!15}
\textbf{\textsc{Prism}} & \textbf{100.0} & \textbf{95.9} & \textbf{95.9} & \textbf{97.3} & \textbf{97.3} & \textbf{100.0} & \textbf{95.9} & \textbf{100.0} & \textbf{98.6} \\
\midrule
\textit{PRISM risk score} & 0.751 & 0.750 & 0.749 & 0.751 & 0.750 & 0.751 & 0.750 & 0.751 & 0.750 \\
\bottomrule
\end{tabular}
\end{table}

\paragraph{Limitations:}
Highly structured multi-step encodings (e.g., chained base64 or semantic paraphrasing) may attenuate individual generation-dynamic signals and represent an important direction for future work.

\section{Extended Discussion and Limitations}
\label{app:discussion}

Our results suggest that secret leakage in multi-agent LLM pipelines should be understood not merely as an isolated output-filtering failure, but as a structural consequence of sequential information propagation through shared context. Once sensitive information enters an append-only intermediate state, downstream agents may reuse or reproduce it even when it is no longer task-relevant, causing leakage risk to compound with pipeline depth. In this setting, post-hoc defences are inherently limited because they operate only after sensitive content may already have been generated, propagated, or partially reconstructed. \textsc{Prism} is designed specifically for this operating point: it monitors generation-time dynamics and intervenes before full credential completion. We therefore position \textsc{Prism} not as a universal defence against all forms of information leakage, but as a targeted generation-time mechanism for a specific and practically important threat model: credential reproduction through sequential agent propagation.

\paragraph{Scope of evaluation and benchmark realism:}
Our benchmark is intentionally synthetic and controlled. This design choice is deliberate rather than accidental: isolating propagation-based leakage under reproducible attack categories and pressure levels enables causal comparison across defence paradigms without confounds introduced by heterogeneous real-world agent stacks, proprietary toolchains, or inconsistent deployment environments. The synthetic setting therefore functions as a stress-testing environment for propagation dynamics rather than a claim of production-complete realism. However, this also imposes clear limits: real deployments may involve richer tool ecosystems, longer-horizon planning, multilingual contexts, or qualitatively different leakage vectors. Accordingly, our results should be interpreted as evidence that propagation amplification is a meaningful structural risk under controlled conditions, rather than as proof that measured leakage rates directly transfer to all production systems.

\paragraph{Metric interpretation and task-level leakage:}
A potentially confusing aspect of our evaluation is the coexistence of subunit recall/F1 below 1.0 with a 0.0\% observed task-level leak rate. These metrics operate at different granularities. Recall and F1 are computed over secret-bearing detection instances (e.g., risky spans or reconstruction attempts), whereas task-level leak rate measures whether any complete exploitable secret appears in final task output. Thus, \textsc{Prism} may miss some individual suspicious spans while still preventing full end-to-end credential disclosure through partial sanitisation, early halting, or the ZK-RC post-pass. Operationally, task-level leak prevention is the primary security objective, while span-level recall reflects sensitivity to intermediate reconstruction signals. We emphasise this distinction to avoid overstating what any single metric captures.

\paragraph{White-box assumptions and deployment practicality:}
The strongest version of \textsc{Prism} assumes access to generation-time telemetry such as token log-probabilities. We intentionally evaluate this white-box setting as an upper-bound deployment regime relevant to self-hosted or enterprise systems where decoding internals may be exposed. We do not claim that such access is universally available. In closed commercial APIs, only black-box deployment may be possible. Our black-box variant demonstrates that text-observable features alone retain substantial discriminative power, but with measurable degradation in final leakage suppression. This should be interpreted as a practical deployability tradeoff rather than a contradiction: white-box telemetry materially improves final-risk elimination, while black-box deployment offers a lower-bound approximation when internal access is unavailable.

\paragraph{Latency and ``real-time'' claims:}
Although \textsc{Prism} is architecturally lightweight (constant-time per-token scoring with bounded feature extraction), our current work prioritises defence efficacy and signal validation over systems-level throughput benchmarking. We therefore caution against overinterpreting ``real-time'' as a deployment guarantee across all infrastructures. Actual latency will depend on model serving architecture, tokenizer implementation, and instrumentation cost, particularly for heterogeneous multi-agent systems. A full production-oriented evaluation should include token-level overhead, throughput degradation, and cost analysis across deployment settings.

\paragraph{Robustness boundaries and adaptive adversaries:}
While \textsc{Prism}'s combination of temporal and structural signals provides robustness beyond simple regex or static scanners, we do not claim comprehensive resistance to all adversaries. The present threat model focuses primarily on direct leakage, propagation leakage, and prompt-driven extraction. More powerful adaptive adversaries may intentionally manipulate generation dynamics by injecting low-entropy padding, stochastic perturbations, fragmentation, encoding (e.g., base64 or character splitting), or semantically preserving transformations to attenuate temporal or structural signatures. These attacks are outside the scope of current evaluation. Our objective here is to establish generation-time monitoring as a viable defence layer, not to claim closure against strategically adaptive exfiltration.

\paragraph{Theoretical contribution and practical role:}
Our entropy-collapse analysis should be interpreted as an explanatory formalisation of why memorised credential reproduction tends to induce detectable concentration dynamics, rather than as a standalone theoretical breakthrough or complete predictive model. The theorem motivates generation-time monitoring by clarifying why deterministic reproduction often creates measurable distributional signatures, but it does not by itself determine optimal thresholds, guarantee universal detection, or replace empirical calibration. In practice, the theory supports, rather than substitutes for the empirical multi-signal classifier.

\paragraph{Utility and overblocking:}
Our reported utility metric measures task-level pass-through (unmodified benign tasks), which is intentionally conservative. It does not fully characterise downstream functional quality when partial sanitisation occurs. In real deployments, some benign structured outputs (e.g., hashes, certificates, configuration identifiers, or long numeric strings) may resemble credential-like forms and incur unnecessary intervention. This tradeoff reflects a broader security--utility balance. Future deployment settings may therefore benefit from context-aware exception policies, adaptive thresholds, or human-in-the-loop escalation for credential-adjacent but legitimate content.

\paragraph{Broader behavioural limitations:}
Finally, \textsc{Prism} primarily models passive leakage emerging from generation dynamics rather than strategic agent behaviour. Goal-directed concealment, covert communication, intentional decomposition of secrets across steps, or semantically encoded collaboration may introduce qualitatively different failure modes not captured by token-level monitoring alone. Addressing such behaviours may require richer semantic, behavioural, or agent-policy-aware defences that operate above the decoding layer.

Overall, we view \textsc{Prism} as a controlled, generation-time defence architecture that demonstrates the practical importance of monitoring leakage during decoding in multi-agent systems. Its strongest contribution is to show that propagation-aware generation monitoring can materially reduce credential leakage under a clearly defined threat model. Its limitations define a roadmap toward broader, more adaptive, and deployment-complete security mechanisms.

\end{document}